\setlist{leftmargin=5.5mm}
\definecolor{highlight}{gray}{0.9}
\newtheorem{theorem}{Theorem}
\newtheorem{assumption}{Assumption}
\newtheorem{definition}{Definition}
\newcommand\normx[1]{\Vert#1\Vert}
\newcommand\ca{c_1}
\newcommand\cb{c_2}
\newcommand\fedavg{\texttt{FedAvg}\xspace}
\newcommand\fedbug{\texttt{FedBug}\xspace}
\title{FedBug: A Bottom-Up Gradual Unfreezing Framework for Federated Learning}
\author{%
    Chia-Hsiang Kao$^1$
    \\
  $^1$Cornell University\\
  \texttt{ck696@cornell.edu} \\
  \And
  Yu-Chiang Frank Wang$^{2}$  \\
  $^2$National Taiwan University \\
  \texttt{ycwang@ntu.edu.tw} \\
}
\begin{document}

\maketitle

\begin{abstract}
Federated Learning (FL) offers a collaborative training framework, allowing multiple clients to contribute to a shared model without compromising data privacy. 
Due to the heterogeneous nature of local datasets, updated client models may overfit and diverge from one another, commonly known as the problem of client drift.
In this paper, we propose FedBug (Federated Learning with Bottom-Up Gradual Unfreezing), a novel FL framework designed to effectively mitigate client drift.
FedBug adaptively leverages the client model parameters, distributed by the server at each global round, as the reference points for cross-client alignment. 
Specifically, on the client side, FedBug begins by freezing the entire model, then gradually unfreezes the layers, from the input layer to the output layer.
This bottom-up approach allows models to train the newly thawed layers to project data into a latent space, wherein the separating hyperplanes remain consistent across all clients. 
We theoretically analyze FedBug in a novel over-parameterization FL setup, revealing its superior convergence rate compared to FedAvg.
Through comprehensive experiments, spanning various datasets, training conditions, and network architectures, we validate the efficacy of FedBug. 
Our contributions encompass a novel FL framework, theoretical analysis, and empirical validation, demonstrating the wide potential and applicability of FedBug.\footnote{Code is available at \href{https://github.com/IandRover/FedBug}{https://github.com/IandRover/FedBug}}.
\end{abstract}

\section{Introduction}

Federated Learning (FL) is a distributed approach that enables multiple clients to collaboratively train a shared model without disclosing their raw data. Federated Average (\fedavg)~\cite{mcmahan2017communication}, one of the most influential FL frameworks, has served as the cornerstone for numerous algorithms in the field. Below, we provide a concise explanation of how \fedavg operates: It involves a central server and several clients. In each global round, the server disseminates the current model to all clients. Each client independently trains its model using its local data until convergence. Once the local training is completed, the clients transmit their models back to the server. The server then averages these models to produce an updated global model, which is subsequently employed in the next round.

In FL, \textit{client drift} refers to the inconsistency between models learned by different clients, arising primarily due to the disparities in their private data distribution~\cite{karimireddy2020scaffold, luo2021no, li2022partial, guo2022fedaug}. As local models overfit to their datasets and converge to local minima, the global model --- derived from averaging client models --- compromises in terms of convergence and performance~\cite{li2020federated, zhao2018federated, zhang2022federated}. 
Extensive studies have developed strategies to tackle the client drift issue. We specifically focus on those utilizing \textit{anchors} shared among clients, including gradient anchors and feature anchors. 
Gradient anchors involve the use of shared gradient information to guide the update of the client model, thereby promoting alignment and mitigating client drift~\cite{karimireddy2020scaffold, xu2021fedcm, das2022faster, karimireddy2020mime, li2019feddane}. On the other hand, feature anchors rely on shared feature information to assist in feature alignment and regularization of the feature space~\cite{luo2021no, tang2022virtual, tan2022fedproto, Xu2023personalized}.
However, these methods may necessitate the extra transmission of gradient information or increased regularization costs~\cite{karimireddy2020mime, xu2021fedcm, karimireddy2020scaffold, li2019feddane} and pose privacy concerns~\cite{luo2021no, tan2022fedproto, Xu2023personalized}. 

{
To mitigate the client drift problem, another line of research has focused on leveraging model parameter anchors with enhanced training efficiency. For example, FedBABU~\cite{oh2021fedbabu} proposes to fix the classifier throughout training and update it only during evaluation. As all clients share the same fixed classifier, a set of decision boundaries is common to all clients, serving as a shared reference for updating the encoder. 
While FedBABU yields promising results in personalized FL scenarios, in which models are allowed to be fine-tuned using the client's private data during the evaluation stage, FedBABU's performance in general FL settings is less optimal and lacks theoretical understanding.
}

{
In this work, we seek to extend FedBABU by including more intermediate layers as reference points, ensuring trainability in general FL scenarios and providing theoretical support. Our approach hinges on two insights: (1) At the start of each global round of \fedavg, all clients receive an identical model from the server, and (2) each intermediate layer parameterizes hyperplanes that separate latent features. 
Taken together, these insights suggest a strategy: By freezing the models received from the server, we can exploit the consistency of the hyperplanes across clients to provide a common feature space for alignment. 
}

{
Building on these insights, we introduce \textbf{\fedbug} (\underline{\textbf{Fed}}erated Learning with \underline{\textbf{B}}ottom-\underline{\textbf{U}}p \underline{\textbf{G}}radual Unfreezing), a novel FL framework leveraging shared parameter anchors to mitigate client drift. 
Unlike \fedavg, \fedbug begins local training by freezing the entire model, then gradually thaws the layers from the input layer to the output layer. 
The key mechanism operates as follows: when a layer becomes trainable (thawed) while its succeeding layers remain frozen, this thawed layer learns to project its inputs into a shared feature space. This space is notably defined by the hyperplanes of the still-frozen succeeding layers, providing a common reference across clients and enhancing cross-client alignment. Then, \fedbug progressively unfreezes the next layer, ensuring the layer's trainability after it has served as a shared reference. This results in a framework that balances alignment and adaptability. 
}

{
We conduct a thorough investigation of \fedbug through theoretical analysis and extensive experiments.
Notably, we are the first to analyze FL algorithms in an over-parameterized setting, pushing the boundaries of the current FL theoretical framework. Utilizing a two-layer linear network with an orthogonal regression task setting, our analysis reveals that \fedbug converges faster than \fedavg. Additionally, we conduct extensive experiments across various datasets (CIFAR-10, CIFAR-100, Tiny-ImageNet), training conditions (label skewness and client participation rate), and network architectures (simple CNN, Resnet-18, Resnet34) to ensure the broad applicability of \fedbug.
}

Our contributions are three-fold:

\begin{itemize}
\item \textbf{Novel Federated Learning Framework:} We propose \fedbug, a novel federated learning framework that leverages model parameters as anchors to effectively address the challenges of client drift in federated learning.
\item \textbf{Theoretical Analysis:} We provide a theoretical analysis of the \fedbug framework, demonstrating its better convergence rate compared to \fedavg. Our analysis focuses on a two-layer linear network with an orthogonal regression task, offering novel insights into federated learning dynamics in the context of over-parameterized models.
\item \textbf{Empirical Validation:} We extensively validate the effectiveness of \fedbug through a series of experiments on diverse datasets (CIFAR-10, CIFAR-100, Tiny-ImageNet), varying training conditions (label skewness, client participation rate), and different model architectures (standard CNN, ResNet18, ResNet34). Furthermore, we assess the compatibility of the \fedbug framework with other federated learning algorithms. Our empirical findings underscore the immense potential and wide-ranging applicability of \fedbug.
\end{itemize}
\section{Literature Review}
\textbf{Mitigating Client Drift Using Gradient and Feature Anchors.}
To address the client drift problem in federated learning, various methods have been explored to explicitly or implicitly provide shared reference points for client models. \textbf{Gradient anchors} are employed in several FL methods. For instance, SCAFFOLD~\cite{karimireddy2020scaffold} incorporates server-level gradients in updates to reduce local noise effects, while FedCM~\cite{xu2021fedcm}, FedGLOMO~\cite{das2022faster}, MIME~\cite{karimireddy2020mime}, and FedDANE~\cite{li2019feddane} leverage server-level gradient to align clients by providing mutual update directions. 
While these approaches have demonstrated effectiveness, they require sharing additional gradient information with the local clients. \textbf{Feature anchors} are used in methods such as CCVR~\cite{luo2021no}, VHL~\cite{tang2022virtual}, FedProto~\cite{tan2022fedproto}, and FedFA~\cite{Xu2023personalized}. These methods employ outside datasets or clients' private datasets to produce and regularize features across different clients. However, these approaches may suffer from privacy leakage, increased dataset or feature transmission costs, and added computation budget. 

\textbf{Mitigating Client Drift Using Parameter Anchor.}
Recent research has emphasized the prominence of client drift in the top layers of models. 
Specifically,~\cite{zhao2018federated, luo2021no, li2022partial, guo2022fedaug} demonstrate that the penultimate layer and classifier exhibit the lowest feature similarities among the clients. These findings suggest that local classifiers undergo significant changes to adapt to the local data distribution, which amplifies challenges related to class imbalance and results in biased model predictions for specific classes~\cite{zhang2022federated, shang2022federated, lee2021preservation, guo2022fedaug}. 
This phenomenon is consistent with research on the long-tail analysis, where~\cite{yu2020devil, kang2020decoupling} demonstrated that the head is biased in class-imbalanced environments. 
To explicitly address the issue of client drift, certain studies leverage parameter anchors to align the clients. 
FedProx~\cite{li2020federated} regularizes the L2 distance between the client model and server model, establishing a shared reference in the parameter space. 
Additionally, FedBABU~\cite{oh2021fedbabu} falls into this category as it uses a fixed classifier as the parameter-level anchor during training. 
As each client's frozen classifier parameterizes the same set of decision boundaries in the feature space, FedBABU also serves as a hyperplane anchor in the latent space. However, FedBABU's performance is suboptimal and necessitates additional personalized training (fine-tuning) on the clients' private dataset during the evaluation stage, a process known as personalization, to achieve satisfactory results.

\section{Method}

\begin{figure}[t]
\centering
\includegraphics[width=.95\textwidth]{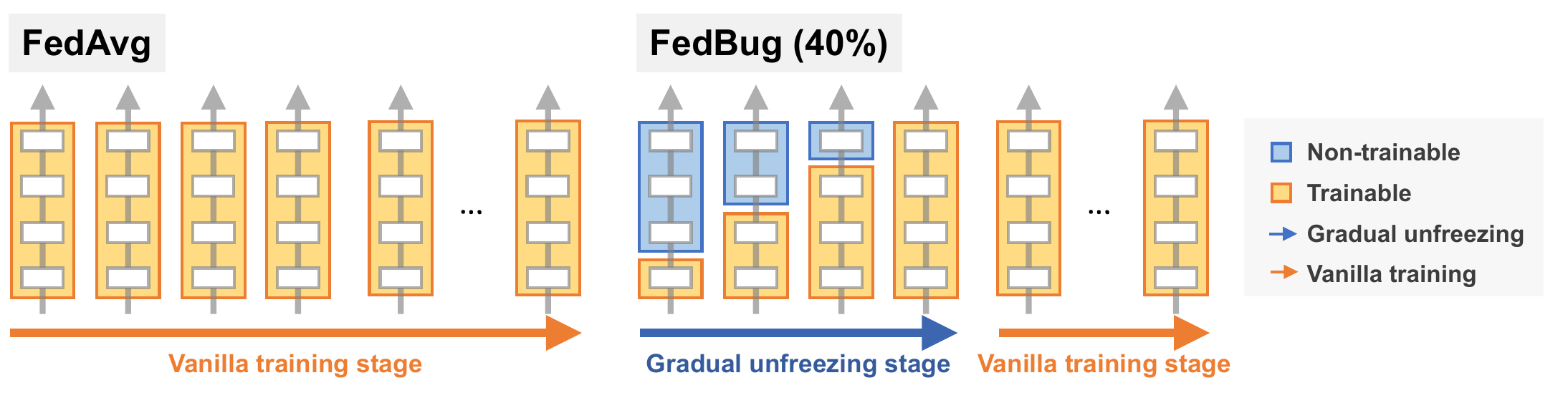}
\vspace{-2mm}
\caption{
    \textbf{Comparisons of \fedavg and \fedbug.}
    While \fedavg updates all network modules during local training, \fedbug gradually unfreezes network modules in a bottom-up manner to tackle client drift. Take \fedbug (40\%) for example, the first 40\% of training iterations perform gradual unfreezing (GU), while the remaining 60\% perform vanilla training. With the same number of training iterations, \fedbug has fewer parameters to update and thus exhibits improved learning efficiency.
}
\label{fig:alg_fedbug}
\end{figure}

\fedbug is designed to pivot parameter anchors to reduce client drift. 
As shown in Figure~\ref{fig:alg_fedbug}, \fedbug divides the local training phase into the gradual unfreezing (GU) stage and the vanilla training stage. In the vanilla training stage, all modules of a model are trainable. 
During the GU stage, \fedbug freezes the whole model initially and then progressively unfreezes its modules bottom-up. This stage is equally divided into several periods, with an additional module being unfrozen in each period.

We first introduce the notations used in this work. 
Consider $N$ clients, where each client $i$ has its own loss function $F_i$. 
The model is denoted as $\theta$ and is composed of $m$ modules, where a \textit{module} can refer to a single CNN layer or a ResNet block.
For simplicity, we use $\theta^{1:m}$ to signify the first $m$ modules of the model, with $\theta^1$ indicating the input module. 
We define $P$ as the percentage of the GU stage and $K$ as the total number of local training iterations, hence the GU stage spans $PK$ local iterations. We denote $\eta_g$ and $\eta_l$ as global and local learning rates, respectively.
Algorithm~\ref{alg:fedbug} provides a detailed description of the \fedbug framework. In line seven of the algorithm, the variable $m$ is determined based on the current local iteration step $k$ and specifies the number of modules to be updated. For a comprehensive comparison between \fedavg and \fedbug, please refer to Appendix~\ref{sec:app_alg}.

\begin{multicols}{2}

We now explore the underlying rationale of \fedbug, using Figure~\ref{fig:alg_fedbug} for illustration. Consider a four-module model trained using \fedbug (40\%), represented as $M=4$, and $P=0.4$. In this case, a module becomes trainable every $0.1K$ local iteration. Suppose we are in the second GU period, where all clients have just unfrozen their second module. During this period, the clients adapt their first and second modules and project the data into a feature space. Notably, the separating hyperplanes within this feature space are parameterized by the yet-to-be-unfrozen modules (the third and fourth modules in this case). These modules remain consistent during this period, serving as a shared anchor among clients. 
Similarly, as we progress to the subsequent third period, this process continues, with clients mapping their data into decision regions defined by the still-frozen fourth module. By leveraging the shared reference, \fedbug ensures ongoing alignment among the clients.

\columnbreak
\begin{algorithm}[H]
    \caption{\texttt{FedBug: Federated Learning with Bottom-Up Gradual Unfreezing}}
    \begin{algorithmic}[1]
        \Statex \textbf{Notation}: 
        \Statex \hspace{10px} $\theta^{1:m}$: the first $m$ modules of model $\theta$ 
        \Statex \hspace{10px} $R$: number of global rounds
        \Statex \hspace{10px} $K$: number of local iterations
        \Statex \hspace{10px} $P$: gradual unfreezing stage percentage
        \State \textbf{Input}: global model $\theta$ with $M$ modules
        \For {$r = 1, \dots, R$}
        \State Sample clients $S \subseteq \{1,...,N\}$
        \For {each client $i \in S$ in parallel}
        \State Initialize local model $\theta_{i} \gets \theta$
        \For {$k=1,\dots, K$}
        \State $m \gets \min \{M, \lceil \frac{kM}{PK} \rceil \}$
        \State //~Update $m$ modules of $\theta_{i}$
        \State $\theta_{i}^{1:m}  \gets \theta_{i}^{1:m} - \eta_l\nabla F_i (\theta_{i}^{1:m})$ 
        \EndFor
        \State $\Delta_i \gets \theta_{i} - \theta$
        \EndFor
        \State $\theta \gets \theta + \frac{\eta_g }{\lvert S \rvert}\sum_{i \in \mathcal{S}}\Delta_i$
        \EndFor
    \end{algorithmic}
    \label{alg:fedbug}
\end{algorithm}
\end{multicols}

\section{Theoretical Analysis}
In this section, we present a theoretical analysis of the convergence of \fedbug. We start with a brief overview, explaining the motivation behind our theoretical setup. We then discuss the technical challenges and elaborate on the key intuitions guiding our approach. Finally, we present the results of our analysis, which offer valuable insights into the convergence behavior of \fedbug within the theoretical framework we have considered.

\textbf{Motivation 1: Extending From Single-Variable Loss Function.}
As our \fedbug leverages a shared feature space across clients during local iterations, an theoretical understanding of it requires at least two neural layers so that the concept of feature space is valid. 
However, conventional FL theoretical frameworks often restrict to single-variable loss functions for clients, which may not adequately capture the dynamics of \fedbug. To bridge this gap, we introduce an over-parameterized analytic FL framework that provides a more comprehensive understanding of the FL dynamics.

\textbf{Motivation 2: Addressing Client Drift via Dataset Heterogeneity.} 
While previous FL analyses mainly tackle label distribution~\cite{li2019feddane, morafah2022rethinking}, clients' loss functions~\cite{acar2021federated, karimireddy2020scaffold, xu2021fedcm}, and local training conditions~\cite{wang2020tackling} as sources of client drift, limited attention has been given to client drift caused by dataset distributions.
Thus, we aim to explore the impact of dataset domain heterogeneity on client drift. This heterogeneity becomes apparent in over-parameterized setups, where the models have sufficient capacity to capture different aspects of the data distributions across clients. 

\textbf{Theoretical Setup and Technical Challenges.} 
Our analysis centers on regression tasks with linear neural network~\citep{saxe2013exact, gunasekar2017implicit, arora2018optimization, gidel2019implicit, tian2021understanding, jing2022understanding, ye2023freeze}, focusing specifically in an orthogonal dataset distribution. This particular setup has been studied in the context of transfer learning with out-of-distribution dataset~\cite{Kumar2022fine} and domain generalization~\cite{lee2023surgical}. For more detailed information about the task setup, please refer to Appendix~\ref{subsec:app_thm_setup}.
Analyzing FL algorithms in the context of a two-layer linear network with an orthogonal dataset distribution presents several challenges. As we show in Appendices~\ref{subsec:app_thm_dynamics} and~\ref{subsec:app_thm_challenges}, over-parameterization results in non-convexity~\cite{safran2021effects} and hyperbolic trajectories of local parameters update, making the analysis difficult. Moreover, as we focus on the FL setup, the convergence of server model parameters must be considered, further complicating the problem.

\textbf{Intuition and Assumptions.} 
To tackle these challenges, we follow~\cite{lee2023surgical} and consider a simplified scenario with two clients. Additionally, we propose three assumptions: (1) linear approximation of the global minimum, (2) bounding of local convexity, and (3) linear approximation of local learning trajectories. 
These assumptions are akin to those utilized in previous FL theoretical frameworks, such as bounded gradient norm, convexity, smoothness, cross-client gradient, or Hessian dissimilarity~\cite{li2019feddane,li2020federated,karimireddy2020mime,xu2021fedcm,reddi2020adaptive,tong2020effective}. 
They serve to streamline the analysis process while capturing essential aspects of convergence speed characterization. 
Although the proposed assumptions may not fully capture the complexity of the problem, they provide a valuable starting point for understanding the dynamics of FL in over-parameterized models.

\textbf{Theoretical Results.} Our analysis demonstrates that, while \fedavg converges linearly to a global minimizer, its convergence is limited by a mismatch between the global and client update directions. In contrast, our \fedbug introduces a single parameter freezing step that encourages greater alignment between the local and global update directions, leading to significantly improved convergence. It offers a theoretical foundation for the effectiveness of the \fedbug algorithm and sheds light on the dynamics of FL in this challenging setup.

In the following subsections, we detail the task and model setup for our analysis in Section~\ref{subsec:thm_setup}, present the key theoretical analysis in Section~\ref{subsec:thm_result}, and describe the simulation experiments in Section~\ref{subsec:thm_exp}. For a comprehensive theoretical analysis, please refer to Appendix~\ref{sec:app_thm}.

\subsection{Task Setting and Model Architecture}
\label{subsec:thm_setup}


To ensure tractability in characterizing \fedbug, we focus on a two-layer linear network as the FL model $f$, with an orthogonal dataset distribution as a source of client drifts. We consider a FL regression task, with two clients denoted as $c_1$ and $c_2$. Each client has different regression data, specifically $\mathcal{T}_1=\{x_1=[1,0], y_1=1\}$ and $\mathcal{T}_2=\{x_2=[0,1], y_2=1\}$. The objective is to minimize the L2 loss, with client $c_1$ ($c_2$) minimizing $L_1=\normx{f(x_1)-y_1}$ ($L_2=\normx{f(x_2)-y_2}^2$). Note that the network $f$ is with two nodes $[a, b]$ in the first layer and one node $[v]$ in the second layer. Specifically, $f(x) = x[a,b]^\top v$. The task setup implies that client $\ca$ ($\cb$) aims to minimize $L_1=|av-1|^2$ ($L_2=|bv-1|^2$).

\subsection{Convergence Rates of \fedavg and \fedbug}
\label{subsec:thm_result}

In order to analyze the convergence properties of both algorithms, we focus on the region around the minima and make additional assumptions for tractability. These assumptions characterize the landscape near the server model, local client models, and global minima.

\begin{assumption} \label{asm_LocalLinear}
\textbf{(Local Linearity)} The server model parameters are initialized in the vicinity of the global minimum, such that the global minimum can be locally approximated as a linear function.
\end{assumption}

\begin{assumption} \label{asm_BoundC}
\textbf{(Bounded Local Convexity)} Under Assumption~\ref{asm_LocalLinear}, there exist constants $\beta_1 > 0$ and $0 < \beta_2 < 1$, such that for all $n$, the value of ${{v^n}^2}$ is bounded as follows: $\beta_1\beta_2 \leq {{v^n}^2} \leq \beta_1$.
\end{assumption}

\begin{assumption}
\label{asm_OrthoTrajectory}
\textbf{(Orthogonal Trajectory and Bounded Error)} Under Assumptions~\ref{asm_LocalLinear} and~\ref{asm_BoundC}, with a properly chosen step size, the local gradient descent update trajectory can be approximated by a linear trajectory orthogonal to the global minimum. The approximation error, which quantifies the discrepancy between the gradient descent solution and the ideal orthogonal linear path solution, is bounded by $\alpha$ times the length of the orthogonal linear trajectory.
\end{assumption}

Assumption~\ref{asm_LocalLinear} ensures that \fedavg and \fedbug starts close enough to the global minimum, enabling convergence without the interference of poor initialization and unbounded convexity. 
Building on Assumption~\ref{asm_LocalLinear}, Assumption~\ref{asm_BoundC} further bound the local convexity around the global minimum, so that both \fedavg and \fedbug are prevented from diverging or oscillating around the global minimum when using an appropriate step size.
Lastly, Assumption~\ref{asm_OrthoTrajectory} guarantees that \fedbug converges along a favorable direction and helps us to quantify the improvement brought by a single global round while allowing for quantification of the approximation error. Please refer to the right panel of Figure~\ref{fig:thm_illustration} for illustration. We defer the discussion of limitation to Section~\ref{sec:thm_disscussion}.

Distinct from conventional FL convergence analysis, we discover that our unique setup permits an alternative approach for proving the convergence behavior of \fedavg. The cornerstone of our theorem is the observation that at each global round, the discrepancy between clients' parameters diminishes by a factor of $r$. Here, $r$ is determined by the degree of alignment between the local update and the axial direction of the last layer parameters –-- a measure that captures the consistency between the local and the global updates.

To measure this discrepancy reduction ratio, we define two useful terms:

\begin{definition}
\label{def_one}
Client discrepancy $d^i$ is the L1 distance between the server model parameters $a^i$ and $b^i$ at the $i$-th global round : $d^i = \normx{a^i-b^i}$.
\end{definition}

\begin{definition}
\label{def_two}
Client discrepancy contraction ratio $r$: $r=d^{i+1}/d^{i}$.
\end{definition}

Now, we present the theorem describing the convergence behavior of \fedavg:
\begin{theorem} \label{thm_Fedavg}
Under Assumptions~\ref{asm_LocalLinear},~\ref{asm_BoundC} and~\ref{asm_OrthoTrajectory}, models trained using \fedavg converge with the client discrepancy contraction ratio upper bounded by $\frac{1+\cos^2\theta (1+\alpha)}{2}$, where $\theta$ is the angle between the local update direction and the axis of the last layer parameter.
\end{theorem}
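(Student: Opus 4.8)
The plan is to track one global round of \fedbug (and \fedavg as a special case) in the two-dimensional parameter space of the first-layer weights $(a,b)$, treating the last-layer scalar $v$ as approximately constant within the round by Assumption~\ref{asm_BoundC}. The key object is the client discrepancy $d^i = \normx{a^i - b^i}$ from Definition~\ref{def_one}. Since both clients start the round from the same server model $(a^i, b^i)$ with $a^i = a$, $b^i = b$ say, and client $\ca$ only ``sees'' the coordinate $a$ (its loss is $|av-1|^2$) while client $\cb$ only sees $b$, the intuition is that each client's gradient-descent trajectory moves essentially along its own axis toward the global-minimum manifold $\{av = 1\} \cap \{bv = 1\}$. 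First I would write down the exact local updates: client $\ca$'s update changes $a$ (and would change $v$, but we freeze it at order analysis via Assumption~\ref{asm_BoundC}), client $\cb$'s update changes $b$. Then the server aggregation step averages the two client deltas, so the new server parameters are the coordinatewise averages of the two post-training models.

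The heart of the argument is geometric. By Assumption~\ref{asm_OrthoTrajectory}, each client's local trajectory can be replaced, up to relative error $\alpha$, by a straight segment orthogonal to the global-minimum manifold. Let $\theta$ be the angle between this local update direction and the axis of the last-layer parameter (equivalently, the relevant coordinate axis in the $(a,b)$ picture). I would decompose each client's displacement into the component that reduces its own coordinate error and the ``leakage'' component. After averaging, the surviving discrepancy $d^{i+1}$ is controlled by: (i) the fraction of $d^i$ that each client fails to remove because its update is not perfectly aligned — this contributes the $\tfrac12$ and the $\tfrac12\cos^2\theta$ terms, since a projection of length $d^i$ onto a direction at angle $\theta$ has squared-length factor $\cos^2\theta$, and the averaging of two such contributions halves and combines them; and (ii) the orthogonal-trajectory approximation error, which inflates this by the multiplicative $(1+\alpha)$ factor. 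Carefully bookkeeping signs and the fact that the two clients move along complementary axes should yield $d^{i+1} \le \tfrac{1 + \cos^2\theta(1+\alpha)}{2}\, d^i$, hence $r \le \tfrac{1+\cos^2\theta(1+\alpha)}{2}$ by Definition~\ref{def_two}.

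The main obstacle I anticipate is making the reduction from the true nonlinear, hyperbolic gradient-descent dynamics to the idealized orthogonal linear segment rigorous and quantitative — i.e., verifying that Assumption~\ref{asm_OrthoTrajectory} is actually invoked correctly and that the error genuinely enters as a clean multiplicative $(1+\alpha)$ rather than accumulating across iterations within the round. Concretely, one must show that summing $PK$ (here effectively $K$, with no freezing, for \fedavg) small gradient steps, each within $\alpha$-relative error of its orthogonal counterpart, still leaves the aggregate displacement within the same relative error of the full orthogonal segment; this needs the step size to be small enough that the within-round curvature (bounded via $\beta_1,\beta_2$ in Assumption~\ref{asm_BoundC}) does not compound. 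A secondary subtlety is handling the coupling through $v$: although Assumption~\ref{asm_BoundC} pins $v^2 \in [\beta_1\beta_2, \beta_1]$, the two clients nudge $v$ in possibly different amounts, and one must check this does not introduce a lower-order term that breaks the clean bound — I would absorb it into the $\alpha$ error budget. Once these are in place, the theorem follows by composing the per-round contraction, and comparing with \fedbug (Theorem to follow) amounts to observing that the freezing step drives the effective $\theta$ toward $0$.
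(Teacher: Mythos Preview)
Your overall architecture --- averaging structure plus the orthogonal-trajectory approximation plus a projection argument --- matches the paper's, but two specific pieces are off and would prevent the proof from going through as written.

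First, you propose to treat $v$ as approximately constant and absorb its motion into the $\alpha$ error budget. This is backwards: the change in $v$ during local training is not a lower-order correction but is precisely what generates the $\cos^2\theta$ term. The paper works in the $(a,v)$ plane for client $c_1$ (and $(b,v)$ for $c_2$): the linearized global minimum $av=1$ is a line there, the orthogonal trajectory of Assumption~\ref{asm_OrthoTrajectory} hits it at $(\tilde a^i_{c_1,*},\tilde v^i_{c_1,*})$, and $\theta$ is the angle between that orthogonal direction and the $v$-axis. If you actually froze $v$, each client would slide along its first-layer axis to $1/v^i$, both would land at the same first-layer value, and you would get $r=\tfrac12$ exactly --- the FedBABU rate, not the \fedavg rate. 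The content of the theorem is that \fedavg pays the extra $\tfrac12\cos^2\theta(1+\alpha)$ \emph{because} $v$ drifts. (Relatedly, for \fedbug the improvement is not that $\theta\to 0$; it is that one frozen step shrinks the first-layer discrepancy by a factor $m=1-\eta {v^i}^2$ before the double projection is applied.)

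Second, the decomposition of $d^{i+1}$ is more algebraic than you indicate. Since client $c_1$ never touches $b$ and client $c_2$ never touches $a$, the server average gives $a^{i+1}=\tfrac12(a^i_{c_1,*}+a^i)$ and $b^{i+1}=\tfrac12(b^i+b^i_{c_2,*})$, hence $d^{i+1}=\tfrac12 d^i+\tfrac12\lvert a^i_{c_1,*}-b^i_{c_2,*}\rvert$. The $\tfrac12 d^i$ term is thus purely structural (the unchanged coordinates survive the average), not an alignment effect. The $\cos^2\theta$ then enters via a \emph{double} projection: the horizontal gap $d^i$ is projected onto the linearized minimum line (factor $\cos\theta$), and that segment is projected back onto the horizontal to read off $\lvert\tilde a^i_{c_1,*}-\tilde b^i_{c_2,*}\rvert$ (another $\cos\theta$). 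A single projection would only give $\cos\theta$. Finally, your per-step error-accumulation concern is moot: Assumption~\ref{asm_OrthoTrajectory} is stated for the full local trajectory to its endpoint, so the $(1+\alpha)$ factor is applied once to $\lvert\tilde a^i_{c_1,*}-\tilde b^i_{c_2,*}\rvert$, not compounded across iterations.
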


\begin{theorem} \label{thm_Fedbug}
Under Assumptions~\ref{asm_LocalLinear},~\ref{asm_BoundC}, and~\ref{asm_OrthoTrajectory}, there exists $1-\beta_2<m<1$ such that if the step size satisfies $\frac{1-m}{\beta_1 \beta_2}<\eta<\frac{1}{\beta_1}$, \fedbug converges with a client discrepancy contraction ratio upper bounded by $\frac{1+m\cos^2\theta(1+\alpha)}{2}$, where $\theta$ is defined as in Theorem~\ref{thm_Fedavg}.
\end{theorem}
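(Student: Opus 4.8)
The plan is to prove Theorem~\ref{thm_Fedbug} along the same lines as Theorem~\ref{thm_Fedavg}, tracking one global round of \fedbug in the two‑layer linear model of Section~\ref{subsec:thm_setup} and showing that the single freezing step replaces the factor $\cos^2\theta$ in the \fedavg contraction ratio by the strictly smaller factor $m\cos^2\theta$. By Assumption~\ref{asm_LocalLinear} we may work in local coordinates around the global‑minimum curve $\{(a,b,v): av=bv=1\}=\{(1/v,1/v,v)\}$: choose $c$ with $1/c=v^i$ and write $a^i=c+\delta_a$, $b^i=c+\delta_b$, so that $d^i=|\delta_a-\delta_b|$, working to first order in the (small, by Assumption~\ref{asm_LocalLinear}) offsets, with $(v^i)^2=1/c^2$ confined to $[\beta_1\beta_2,\beta_1]$ by Assumption~\ref{asm_BoundC} and with the server average taken with $\eta_g=1$, $|S|=2$.

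\textbf{Step 1 (frozen phase).} During the initial gradual‑unfreezing period only the first module (the weights $a,b$) is trainable while $v$ is held at $v^i$. Since $L_1=|av-1|^2$ is independent of $b$ (and $L_2$ of $a$), client $c_1$ performs one step of one‑dimensional gradient descent on $a$; writing $u=av^i-1$, this sends $u\mapsto \rho\,u$ with $\rho:=1-2\eta (v^i)^2$. Combining Assumption~\ref{asm_BoundC} with the step‑size window $\frac{1-m}{\beta_1\beta_2}<\eta<\frac{1}{\beta_1}$ — which is non‑empty exactly because $m>1-\beta_2$ — gives $2(1-m)<2\eta(v^i)^2<2$, hence $\rho\in(-1,\,2m-1)\subseteq(-1,\,m)$. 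Thus after the freezing step client $c_1$ sits at $(a',v^i)$ whose signed offset from $\{av=1\}$ is the starting offset $\delta_a$ scaled by $\rho$ (up to $O(\|\delta\|^2)$).

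\textbf{Step 2 (unfrozen phase and server average).} Once $v$ is thawed, the remaining local iterations are ordinary joint gradient descent, which by Assumption~\ref{asm_OrthoTrajectory} is, up to relative error $\alpha$, the orthogonal projection of $(a',v^i)$ onto $\{av=1\}$ along the direction $\propto(v^i,a')\propto(1,c^2)$; the angle of this direction to the last‑layer axis is exactly the $\theta$ of Theorem~\ref{thm_Fedavg}, with $\cos^2\theta=c^4/(1+c^4)$. A first‑order projection computation then gives the post‑training value $a_1=c+\rho\cos^2\theta\,\delta_a+O(\|\delta\|^2)$, and symmetrically $b_2=c+\rho\cos^2\theta\,\delta_b$. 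Applying $a^{i+1}=\tfrac12(a^i+a_1)$ and $b^{i+1}=\tfrac12(b^i+b_2)$, the $c$‑terms cancel and $a^{i+1}-b^{i+1}=\tfrac12(1+\rho\cos^2\theta)(\delta_a-\delta_b)$; folding in the Assumption~\ref{asm_OrthoTrajectory} error exactly as in the proof of Theorem~\ref{thm_Fedavg} replaces $\rho\cos^2\theta$ by $\rho\cos^2\theta(1+\alpha)$, so $r=d^{i+1}/d^i\le\frac{1+\rho\cos^2\theta(1+\alpha)}{2}$. Since $\rho<m<1$ (and the $\rho<0$ case only strengthens the bound, the contraction then being below $\tfrac12$), this yields $r\le\frac{1+m\cos^2\theta(1+\alpha)}{2}$, which is strictly better than the \fedavg bound of Theorem~\ref{thm_Fedavg}.

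\textbf{Main obstacle.} The delicate part is the coupling of the two phases: one must verify that thawing $v$ does not undo the pre‑alignment produced in Step~1 — i.e. that the joint‑phase motion from $(a',v^i)$ is the small orthogonal correction of size $\propto\rho\,\delta_a$ rather than a fresh $O(\delta_a)$ excursion — and that the orthogonal‑trajectory error of Assumption~\ref{asm_OrthoTrajectory} is measured against this short residual trajectory. Getting the first‑order bookkeeping right, including checking that the induced drift of the $v$‑coordinate keeps $(v^{i+1})^2$ inside the Assumption~\ref{asm_BoundC} window so the per‑round argument iterates, and dispatching the sign of $\rho$ when absorbing the $\alpha$‑term, is where the real work lies; the remaining geometry is identical to that underlying Theorem~\ref{thm_Fedavg}.
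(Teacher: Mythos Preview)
Your two–phase decomposition (one frozen step contracting the first–layer discrepancy, then the \fedavg double–projection $\cos^2\theta$, then the $(1+\alpha)$ error) is exactly the paper's argument; your explicit linearization around $c=1/v^i$ is just a coordinate version of the paper's geometric ``double projection''. The paper in fact does less: it does not track the $v$–drift or iterate the per–round bound as you flag in your ``Main obstacle'' paragraph.

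There is one genuine wrinkle. The paper computes the frozen step with the convention $\partial_a L_1=v(av-1)$ (no factor $2$), so its contraction factor is $1-\eta{v^i}^2$, and the stated step–size window $\frac{1-m}{\beta_1\beta_2}<\eta<\frac{1}{\beta_1}$ together with Assumption~\ref{asm_BoundC} pins this in $(0,m)$ directly --- no sign case ever arises. Your $\rho=1-2\eta(v^i)^2\in(-1,2m-1)$ can be negative, and your claim that ``the $\rho<0$ case only strengthens the bound'' is where the argument slips once you fold in Assumption~\ref{asm_OrthoTrajectory}: the approximation error there is bounded by $\alpha$ times the \emph{length} of the orthogonal trajectory, which scales with $|\rho|$, not $\rho$. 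Following the paper's (and your own Theorem~\ref{thm_Fedavg}) accounting, one gets
\[
r\;\le\;\frac{1+|\rho|\cos^2\theta(1+\alpha)}{2},
\]
and $|\rho|<m$ is not guaranteed on your range (e.g.\ $\rho$ near $-1$). So ``replaces $\rho\cos^2\theta$ by $\rho\cos^2\theta(1+\alpha)$'' is not the right substitution when $\rho<0$. The fix is trivial --- adopt the paper's gradient convention, or equivalently halve the step–size window so that $\rho\in(0,m)$ --- after which your proof and the paper's coincide line for line.
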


\textbf{Proof Sketch.} We defer the complete proof to Appendix~\ref{subsec:app_thm_convergence} and provide a proof sketch here. We start with the case of \fedavg. The goal of the proof is to show the discrepancy between clients contracts after each global round. To estimate the discrepancy term, we make use of geometric considerations and find that the client's discrepancy after local updates is a double projection of the original client discrepancy, where the projection angle is the difference between the local update direction and the axis of the last layer parameter. As for the convergence of \fedbug, we analyze the case where the first layer parameters are updated for one step, while the final layer parameter is kept frozen. With a proper step size, this update directly minimizes the client discrepancy and thus leads to a \textit{lower} client discrepancy when the clients reach their local minimum, compared with \fedavg. As a result, we can show that \fedbug exhibits a \textit{faster} convergence speed than \fedavg.

\begin{figure}[t]
\centering
\includegraphics[width=1\textwidth]{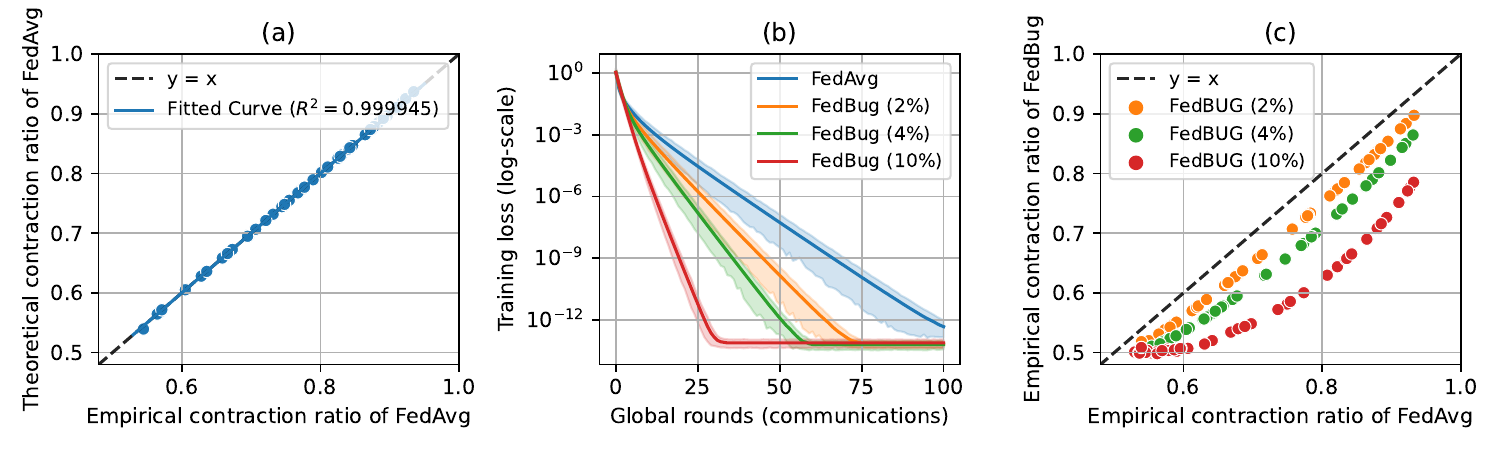}
\vspace{-5mm}
\caption{
\textbf{(a)} Alignment between theoretical and empirical client discrepancy contraction ratios.
\textbf{(b)} Comparisons of training loss curves of \fedavg and \fedbug. 
\textbf{(c)} Comparison of the contraction ratio of \fedavg and \fedbug.
Note that (a) is shown to confirm the validity of our assumptions and theorems, (b) is to validate our improved convergence over \fedavg, and (c) is to further verify that \fedbug achieves reduced client discrepancy when comparing to \fedavg.
}
\label{fig:thm_experiments}
\end{figure}

\subsection{Empirical Validation of Convergence Rate}
\label{subsec:thm_exp}
We conduct simulation experiments based on our task and model setup to empirically verify our theorem and the underlying assumptions. Details over setup can be found in Appendix~\ref{subsec:app_thm_exp}. The results are presented in Figure~\ref{fig:thm_experiments}. In Fig.~\ref{fig:thm_experiments}(a), we observe a high correlation between the contraction ratio estimated theoretically using $\theta$ and the one obtained by following Definitions~\ref{def_one} and~\ref{def_two} in \fedavg. This supports the validity of our assumptions and theorems. Based on the training loss curves shown in Fig.~\ref{fig:thm_experiments}(b), we see that \fedbug converges faster than \fedavg, confirming its superior convergence properties. As for Fig.~\ref{fig:thm_experiments}(c), we compare the empirical client discrepancy contraction ratios between \fedavg (x-axis) and \fedbug (y-axis). Given a contraction ratio of \fedavg, further decreased ratios can be observed for \fedbug with varying GU ratios. These simulation experiments provide empirical evidence supporting our theoretical analysis and verify the improved performance of \fedbug compared to \fedavg.

\section{Experiments}

We present an extensive evaluation of \fedbug across various FL scenarios. We provide a brief overview of the datasets and models, with a detailed description of our setup available in the Appendix. The code is written in PyTorch and executed on a single GPU, either an NVIDIA 3090 or V100. All experiments are performed with four distinct random seeds. Experimental details along with the ablation study are deferred to Appendices~\ref{subsec:app_exp_implement} and~\ref{subsec:app_exp_results}, respectively.

\subsection{Experimental Setup}
\textbf{Datasets.} We utilized benchmark datasets that follow the same train/test splits as previous works~\cite{mcmahan2017communication, li2020federated, acar2021federated}. These datasets include CIFAR-10, CIFAR-100, and Tiny-ImageNet. We randomly assigned data to the clients for the IID label distribution split~\cite{mcmahan2017communication, acar2021federated}. As for the non-IID label distribution, we followed the Dirichlet distribution Dir($\alpha$), as in~\cite{yurochkin2019bayesian, acar2021federated}. Here, $\alpha$ is a concentration parameter, with a smaller $\alpha$ indicating stronger data heterogeneity. When $\alpha$ equals $\infty$, the setting is homogeneous. We set $\alpha$ to $0.3$ for CIFAR-10 and CIFAR-100, and $0.5$ for Tiny-ImageNet.

\textbf{Models.} For standard CNN, we employ a standard convolutional neural network, similar to~\cite{mcmahan2017communication, acar2021federated}, consisting of two (three) convolutional layers followed by three fully connected layers for CIFAR-10 and CIFAR-100 (Tiny-ImageNet) dataset. For ResNet-18 and ResNet-34~\cite{he2016deep}, we change the batch normalization to group normalization~\cite{acar2021federated, hosseini2021federated, yu2021fed2, hyeon2021fedpara, hsieh2020non}.

\subsection{Experimental Results}

\definecolor{hl}{gray}{0.9}
\renewcommand{\arraystretch}{0.93}{
\setlength{\tabcolsep}{1.35mm}{
\begin{table*}[t]
\centering
\resizebox{1.0\textwidth}{!}{
\begin{tabular}{lcccccccccccccc}
\toprule
\multirow{4.5}{*}{Method} 
& \multicolumn{11}{c}{\textbf{CIFAR-10} (\# clients: $100$; client participation rate: $\textbf{1\%}$)} \\
\cmidrule{2-12}
& \multicolumn{5}{c}{IID label distribution ($\alpha=\infty$)} 
&& \multicolumn{5}{c}{Non-IID label distribution ($\alpha=0.3$)} \\
\cmidrule{2-6}\cmidrule{8-12}
& FedAvg & FedProx & FedDyn & FedExp & FedDecorr && FedAvg & FedProx & FedDyn & FedExp & FedDecorr \\
\midrule
Vanilla & 78.61 & 79.65 & 80.62 & 80.03 & 80.15 && 76.13 & 77.06 & 78.17 & 77.14 & 76.80 \\
FedBug (10\%) & \cellcolor{hl}{80.22} & \cellcolor{hl}{79.92} & \cellcolor{hl}{81.02} & \cellcolor{hl}{80.74} & \cellcolor{hl}{79.96} && \cellcolor{hl}{\textbf{\textbf{77.98}}} & \cellcolor{hl}{77.89} & \cellcolor{hl}{78.78} & \cellcolor{hl}{\textbf{77.94}} & \cellcolor{hl}{\textbf{77.58}} \\
FedBug (20\%) & \cellcolor{hl}{\textbf{80.63}} & \cellcolor{hl}{80.31} & \cellcolor{hl}{\textbf{81.47}} & \cellcolor{hl}{\textbf{81.00}} & \cellcolor{hl}{80.19} && \cellcolor{hl}{77.68} & \cellcolor{hl}{77.67} & \cellcolor{hl}{\textbf{78.80}} & \cellcolor{hl}{77.73} & \cellcolor{hl}{77.22} \\
FedBug (40\%) & \cellcolor{hl}{80.43} & \cellcolor{hl}{\textbf{80.34}} & \cellcolor{hl}{81.06} & \cellcolor{hl}{80.70} & \cellcolor{hl}{\textbf{80.27}} && \cellcolor{hl}{77.54} & \cellcolor{hl}{77.54} & \cellcolor{hl}{78.36} & \cellcolor{hl}{77.45} & \cellcolor{hl}{77.31} \\
\midrule
\noalign{\vskip 1mm}    
\midrule
\multirow{4.5}{*}{Method} 
& \multicolumn{11}{c}{\textbf{CIFAR-10} (\# clients: $100$; client participation rate: $\textbf{10\%}$)} \\
\cmidrule{2-12}
& \multicolumn{5}{c}{IID label distribution ($\alpha=\infty$)} 
&& \multicolumn{5}{c}{Non-IID label distribution ($\alpha=0.3$)} \\
\cmidrule{2-6}\cmidrule{8-12}
& FedAvg & FedProx & FedDyn & FedExp & FedDecorr && FedAvg & FedProx & FedDyn & FedExp & FedDecorr \\
\midrule
Vanilla & 80.06 & 79.86 & 82.62 & 79.95 & 80.01 && 78.11 & 77.67 & 81.15 & 77.74 & 78.12 \\
FedBug (10\%) & \cellcolor{hl}{80.57} & \cellcolor{hl}{80.51} & \cellcolor{hl}{82.89} & \cellcolor{hl}{80.69} & \cellcolor{hl}{80.66} && \cellcolor{hl}{78.43} & \cellcolor{hl}{78.58} & \cellcolor{hl}{81.21} & \cellcolor{hl}{78.27} & \cellcolor{hl}{78.65} \\
FedBug (20\%) & \cellcolor{hl}{80.66} & \cellcolor{hl}{80.55} & \cellcolor{hl}{83.27} & \cellcolor{hl}{81.10} & \cellcolor{hl}{81.31} && \cellcolor{hl}{78.86} & \cellcolor{hl}{78.77} & \cellcolor{hl}{81.48} & \cellcolor{hl}{78.86} & \cellcolor{hl}{78.67} \\
FedBug (40\%) & \cellcolor{hl}{\textbf{80.91}} & \cellcolor{hl}{\textbf{81.02}} & \cellcolor{hl}{\textbf{83.60}} & \cellcolor{hl}{\textbf{81.26}} & \cellcolor{hl}{\textbf{81.37}} && \cellcolor{hl}{\textbf{79.18}} & \cellcolor{hl}{\textbf{79.28}} & \cellcolor{hl}{\textbf{81.80}} & \cellcolor{hl}{\textbf{79.05}} & \cellcolor{hl}{\textbf{79.20}} \\
\bottomrule
\end{tabular}
}
\vspace{-1mm}
\caption{\textbf{Experiments on CIFAR-10 with standard CNN.} We conduct experiments with different client participation rates ($1\%$ and $10\%$), degrees of heterogeneity ($\alpha\in \{0.3, \infty\}$), and combinations of FL algorithms. Test accuracy (\%) is averaged over four seeds in all experiments. Bold font indicates the highest accuracy in each column, while the gray background highlights our framework.}
\label{tbl_c10}
\end{table*}
}
}
\renewcommand{\arraystretch}{0.93}{
\setlength{\tabcolsep}{1.35mm}{
\begin{table*}[t]
\centering
\resizebox{1.0\textwidth}{!}{
\begin{tabular}{lcccccccccccccc}
\toprule
\multirow{4.5}{*}{Method} 
& \multicolumn{11}{c}{\textbf{CIFAR-100} (\# clients: $100$; client participation rate: $\textbf{1\%}$)} \\
\cmidrule{2-12}
& \multicolumn{5}{c}{IID label distribution ($\alpha=\infty$)} 
&& \multicolumn{5}{c}{Non-IID label distribution ($\alpha=0.3$)} \\
\cmidrule{2-6}\cmidrule{8-12}
& FedAvg & FedProx & FedDyn & FedExp & FedDecorr && FedAvg & FedProx & FedDyn & FedExp & FedDecorr \\
\midrule
Vanilla & 39.33 & 40.19 & 49.61 & 39.51 & 42.71 && 40.43 & 40.46 & 48.69 & 41.32 & 43.81 \\
FedBug (10\%) & \cellcolor{hl}{41.56} & \cellcolor{hl}{41.17} & \cellcolor{hl}{51.18} & \cellcolor{hl}{42.26} & \cellcolor{hl}{43.38} && \cellcolor{hl}{42.99} & \cellcolor{hl}{42.27} & \cellcolor{hl}{50.02} & \cellcolor{hl}{42.86} & \cellcolor{hl}{44.53} \\
FedBug (20\%) & \cellcolor{hl}{43.28} & \cellcolor{hl}{43.49} & \cellcolor{hl}{51.49} & \cellcolor{hl}{43.51} & \cellcolor{hl}{44.14} && \cellcolor{hl}{43.47} & \cellcolor{hl}{43.34} & \cellcolor{hl}{50.58} & \cellcolor{hl}{43.67} & \cellcolor{hl}{44.75} \\
FedBug (40\%) & \cellcolor{hl}{\textbf{45.59}} & \cellcolor{hl}{\textbf{45.33}} & \cellcolor{hl}{\textbf{51.91}} & \cellcolor{hl}{\textbf{45.83}} & \cellcolor{hl}{\textbf{44.87}} && \cellcolor{hl}{\textbf{45.35}} & \cellcolor{hl}{\textbf{44.61}} & \cellcolor{hl}{\textbf{51.00}} & \cellcolor{hl}{\textbf{45.03}} & \cellcolor{hl}{\textbf{44.78}} \\
\midrule
\noalign{\vskip 1mm}    
\midrule
\multirow{4.5}{*}{Method} 
& \multicolumn{11}{c}{\textbf{CIFAR-100} (\# clients: $100$; client participation rate: $\textbf{10\%}$)} \\
\cmidrule{2-12}
& \multicolumn{5}{c}{IID label distribution ($\alpha=\infty$)} 
&& \multicolumn{5}{c}{Non-IID label distribution ($\alpha=0.3$)} \\
\cmidrule{2-6}\cmidrule{8-12}
& FedAvg & FedProx & FedDyn & FedExp & FedDecorr && FedAvg & FedProx & FedDyn & FedExp & FedDecorr \\
\midrule
Vanilla & 44.45 & 44.64 & 46.62 & 46.01 & 45.14 && 44.05 & 43.58 & 45.01 & 44.89 & 43.28 \\
FedBug (10\%) & \cellcolor{hl}{47.51} & \cellcolor{hl}{46.85} & \cellcolor{hl}{47.89} & \cellcolor{hl}{48.68} & \cellcolor{hl}{46.12} && \cellcolor{hl}{47.38} & \cellcolor{hl}{47.07} & \cellcolor{hl}{47.03} & \cellcolor{hl}{47.83} & \cellcolor{hl}{\textbf{44.27}} \\
FedBug (20\%) & \cellcolor{hl}{48.77} & \cellcolor{hl}{48.35} & \cellcolor{hl}{48.20} & \cellcolor{hl}{49.54} & \cellcolor{hl}{\textbf{46.30}} && \cellcolor{hl}{\textbf{47.92}} & \cellcolor{hl}{\textbf{47.87}} & \cellcolor{hl}{\textbf{47.63}} & \cellcolor{hl}{\textbf{48.16}} & \cellcolor{hl}{43.99} \\
FedBug (40\%) & \cellcolor{hl}{\textbf{49.51}} & \cellcolor{hl}{\textbf{49.42}} & \cellcolor{hl}{\textbf{48.80}} & \cellcolor{hl}{\textbf{50.03}} & \cellcolor{hl}{46.29} && \cellcolor{hl}{47.80} & \cellcolor{hl}{47.67} & \cellcolor{hl}{47.32} & \cellcolor{hl}{47.87} & \cellcolor{hl}{44.02} \\
\bottomrule
\end{tabular}
}
\vspace{-1mm}
\caption{\textbf{Experiments on CIFAR-100 with standard CNN.} We conduct experiments with different client participation rates ($1\%$ and $10\%$), degrees of heterogeneity ($\alpha\in \{0.3, \infty\}$), and FL algorithms. Note that improvements on CIFAR-100 are more remarkable than those on CIFAR-10.
}
\label{tbl_c100}
\end{table*}
}
}

\textbf{Improved Performance and High Compatibility With Various FL Algorithms.}
The main results on CIFAR-10, CIFAR-100, and Tiny-ImageNet dataset are presented in Tables~\ref{tbl_c10},~\ref{tbl_c100}, and~\ref{tbl_tin_cnn}. 
These results highlight the high compatibility of the \fedbug framework when combined with different FL algorithms. 
Additionally, we consistently observe that the \fedbug framework outperforms the vanilla training framework, even when the gradual unfreezing stage comprises only ten percent of the training process. This indicates the effectiveness and efficiency of the \fedbug approach in improving model performance. 
Furthermore, the \fedbug training framework demonstrates a consistent synergistic effect across five different FL algorithms, two client participation levels, and both IID and non-IID label distributions. This demonstrates the broad applicability of our proposed framework in combination with existing FL training algorithms and experimental setups.

\definecolor{hl}{gray}{0.9}
\renewcommand{\arraystretch}{0.93}{
\setlength{\tabcolsep}{1.35mm}{
\begin{table*}[t]
\centering
\resizebox{1.0\textwidth}{!}{
\begin{tabular}{lcccccccccccccc}
\toprule
\multirow{4.5}{*}{Method} 
& \multicolumn{11}{c}{\textbf{Tiny-ImageNet} (\# clients: $10$; client participation rate: $\textbf{10\%}$)} \\
\cmidrule{2-12}
& \multicolumn{5}{c}{IID label distribution ($\alpha=\infty$)} 
&& \multicolumn{5}{c}{Non-IID label distribution ($\alpha=0.5$)} \\
\cmidrule{2-6}\cmidrule{8-12}
& FedAvg & FedProx & FedDyn & FedExp & FedDecorr && FedAvg & FedProx & FedDyn & FedExp & FedDecorr \\
\midrule
Vanilla & 28.40 & 28.41 & 29.84 & 29.39 & 28.15 && 26.40 & 26.76 & 28.46 & 27.44 & 26.51 \\
FedBug (12\%) & \cellcolor{hl}{28.87} & \cellcolor{hl}{28.92} & \cellcolor{hl}{30.18} & \cellcolor{hl}{29.76} & \cellcolor{hl}{28.77} && \cellcolor{hl}{27.23} & \cellcolor{hl}{27.34} & \cellcolor{hl}{28.94} & \cellcolor{hl}{27.93} & \cellcolor{hl}{26.76} \\
FedBug (24\%) & \cellcolor{hl}{29.04} & \cellcolor{hl}{29.35} & \cellcolor{hl}{30.39} & \cellcolor{hl}{\textbf{30.63}} & \cellcolor{hl}{28.70} && \cellcolor{hl}{\textbf{27.78}} & \cellcolor{hl}{27.25} & \cellcolor{hl}{29.12} & \cellcolor{hl}{28.37} & \cellcolor{hl}{26.93} \\
FedBug (48\%) & \cellcolor{hl}{\textbf{29.58}} & \cellcolor{hl}{\textbf{29.67}} & \cellcolor{hl}{\textbf{27.78}} & \cellcolor{hl}{30.42} & \cellcolor{hl}{\textbf{29.29}} && \cellcolor{hl}{27.65} & \cellcolor{hl}{\textbf{27.83}} & \cellcolor{hl}{\textbf{29.45}} & \cellcolor{hl}{\textbf{28.44}} & \cellcolor{hl}{\textbf{27.75}} \\

\midrule
\noalign{\vskip 1mm}    
\midrule
\multirow{4.5}{*}{Method} 
& \multicolumn{11}{c}{\textbf{Tiny-ImageNet} (\# clients: $10$; client participation rate: $\textbf{30\%}$)} \\
\cmidrule{2-12}
& \multicolumn{5}{c}{IID label distribution ($\alpha=\infty$)} 
&& \multicolumn{5}{c}{Non-IID label distribution ($\alpha=0.5$)} \\
\cmidrule{2-6}\cmidrule{8-12}
& FedAvg & FedProx & FedDyn & FedExp & FedDecorr && FedAvg & FedProx & FedDyn & FedExp & FedDecorr \\
Vanilla & 26.48 & 26.62 & 31.35 & 26.79 & 28.96 && 25.09 & 25.40 & 29.78 & 24.98 & 26.57 \\
FedBug (12\%) & \cellcolor{hl}{26.93} & \cellcolor{hl}{26.95} & \cellcolor{hl}{31.31} & \cellcolor{hl}{26.79} & \cellcolor{hl}{29.50} && \cellcolor{hl}{25.69} & \cellcolor{hl}{25.71} & \cellcolor{hl}{29.95} & \cellcolor{hl}{25.72} & \cellcolor{hl}{26.52} \\
FedBug (24\%) & \cellcolor{hl}{26.94} & \cellcolor{hl}{27.15} & \cellcolor{hl}{\textbf{31.54}} & \cellcolor{hl}{27.10} & \cellcolor{hl}{29.43} && \cellcolor{hl}{25.82} & \cellcolor{hl}{26.02} & \cellcolor{hl}{30.05} & \cellcolor{hl}{25.62} & \cellcolor{hl}{27.49} \\
FedBug (48\%) & \cellcolor{hl}{\textbf{27.64}} & \cellcolor{hl}{\textbf{27.84}} & \cellcolor{hl}{31.44} & \cellcolor{hl}{\textbf{27.76}} & \cellcolor{hl}{\textbf{29.60}} && \cellcolor{hl}{\textbf{26.36}} & \cellcolor{hl}{\textbf{26.81}} & \cellcolor{hl}{\textbf{30.71}} & \cellcolor{hl}{\textbf{26.33}} & \cellcolor{hl}{\textbf{27.82}} \\
\bottomrule
\end{tabular}
}
\vspace{-1mm}
\caption{\textbf{Experiments on Tiny-ImageNet of $10$ clients with standard CNN.} We conduct experiments with different client participation rates ($10\%$ and $30\%$), degrees of heterogeneity ($\alpha\in \{0.3, \infty\}$), and combinations of FL algorithms. 
}
\vspace{-1mm}
\label{tbl_tin_cnn}
\end{table*}
}
}

\textbf{Applicability of \fedbug on ResNet.} When applying \fedbug to larger models like ResNet, a natural question arises: What should be the smallest unit to unfreeze during the GU stage --- a ResNet Module or a residual block? Since both ResNet-18 and ResNet-34 can be seen as having four ResNet Modules or consisting of eight and sixteen residual blocks, respectively, a robust framework should remain agnostic to the unit definition. Therefore, we evaluate the adaptability of \fedbug using two distinct unfreezing strategies: (1) progressively unfreezing one ResNet Module at a time, and (2) progressively unfreezing one residual block at a time. Notably, we capitalize the term ResNet "Module" to differentiate it from the general module mentioned in Algorithm~\ref{alg:fedbug}. 

The experimental results, presented in Table~\ref{tbl_resnet}, consistently demonstrate the superiority of the \fedbug framework over the vanilla training framework across both unfreezing strategies and different label distributions. Interestingly, we observe that both strategies perform comparably well on the datasets, indicating the effectiveness of \fedbug.
\definecolor{hl}{gray}{0.9}
\renewcommand{\arraystretch}{0.94}{
\setlength{\tabcolsep}{1.5mm}{
\begin{table*}[t]
\centering
\resizebox{1.0\textwidth}{!}{
\begin{tabular}{lcccccccccccccc}
\toprule
\multirow{5.5}{*}{Method} 
& \multicolumn{11}{c}{\textbf{CIFAR-100} (\# clients: $10$; client participation rate: $10\%$)} \\
\cmidrule{2-12}
& \multicolumn{5}{c}{IID label Distribution ($\alpha=\infty$)} 
&& \multicolumn{5}{c}{Non-IID label Distribution ($\alpha=0.3$)} \\
\cmidrule{2-6} \cmidrule{8-12}
& \multicolumn{2}{c}{ResNet-18} 
&& \multicolumn{2}{c}{ResNet-34} 
&& \multicolumn{2}{c}{ResNet-18}
&& \multicolumn{2}{c}{ResNet-34} 
\\
\cmidrule{2-3} \cmidrule{5-6} \cmidrule{8-9} \cmidrule{11-12}
& Module(4) & Block(8) && Module(4) & Block(16)
&& Module(4) & Block(8) && Module(4) & Block(16)\\
\midrule
Vanilla & 52.59 & 52.59 && 52.64 & 52.64 && 49.04 & 49.04 && 48.69 & 48.69 \\
FedBug (20\%) & \cellcolor{hl}{53.25} & \cellcolor{hl}{53.05} 
&& \cellcolor{hl}{53.01} & \cellcolor{hl}{53.42} 
&& \cellcolor{hl}{\textbf{49.70}} & \cellcolor{hl}{49.64} 
&& \cellcolor{hl}{49.20} & \cellcolor{hl}{49.17} \\
FedBug (40\%) & \cellcolor{hl}{\textbf{53.65}} & \cellcolor{hl}{\textbf{53.49}} 
&& \cellcolor{hl}{\textbf{53.56}} & \cellcolor{hl}{\textbf{53.56}} 
&& \cellcolor{hl}{49.36} & \cellcolor{hl}{\textbf{49.69}} 
&& \cellcolor{hl}{\textbf{49.37}} & \cellcolor{hl}{49.33} \\
\midrule
\noalign{\vskip 1mm}    
\midrule
\multirow{5.5}{*}{Method} 
& \multicolumn{11}{c}{\textbf{Tiny-ImageNet} (\# clients: $10$; client participation rate: $10\%$)} \\
\cmidrule{2-12}
& \multicolumn{5}{c}{IID label Distribution ($\alpha=\infty$)} 
&& \multicolumn{5}{c}{Non-IID label Distribution ($\alpha=0.5$)} \\
\cmidrule{2-6} \cmidrule{8-12}
& \multicolumn{2}{c}{ResNet-18} 
&& \multicolumn{2}{c}{ResNet-34} 
&& \multicolumn{2}{c}{ResNet-18}
&& \multicolumn{2}{c}{ResNet-34} 
\\
\cmidrule{2-3} \cmidrule{5-6} \cmidrule{8-9} \cmidrule{11-12}
& Module(4) & Block(8) && Module(4) & Block(16)
&& Module(4) & Block(8) && Module(4) & Block(16)\\
\midrule
Vanilla & 33.88 & 33.88 
&& 33.22 & 33.22
&& 31.91 & 31.91 
&& 31.53 & 31.53 \\
FedBug (20\%) & \cellcolor{hl}{34.25} & \cellcolor{hl}{34.31} 
&& \cellcolor{hl}{34.28} & \cellcolor{hl}{34.36} 
&& \cellcolor{hl}{32.29} & \cellcolor{hl}{32.32} 
&& \cellcolor{hl}{32.33} & \cellcolor{hl}{32.31} \\
FedBug (40\%) & \cellcolor{hl}{\textbf{35.28}} & \cellcolor{hl}{\textbf{35.17}} 
&& \cellcolor{hl}{\textbf{35.12}} & \cellcolor{hl}{\textbf{35.10}} 
&& \cellcolor{hl}{\textbf{32.86}} & \cellcolor{hl}{\textbf{33.47}} 
&& \cellcolor{hl}{\textbf{33.20}} & \cellcolor{hl}{\textbf{33.36}} \\
\bottomrule
\end{tabular}
}
\vspace{-1mm}
\caption{\textbf{Experiments on ResNet with module-wise and block-wise unfreezing strategies.} Note that two unfreezing strategies are considered: (1) unfreezing one ResNet \textit{Module} at a time and (2) unfreezing one residual \textit{Block} at a time. Module (4) indicates that the model consists of four ResNet Modules, while Block (16) signifies the model consists of sixteen residual Blocks. Consistent improvements of \fedbug with both unfreezing strategies can be observed.}
\label{tbl_resnet}
\end{table*}
}
}


\textbf{Impact of Gradual Unfreezing Percentage.} 
We investigate the impact of the percentage of the GU stage in CIFAR-100 and Tiny-ImageNet in the standard CNN model. The baseline framework is \fedavg, represented by a percentage of $0\%$. The results are shown in Figure~\ref{fig:exp_percentage_gu}. 
Our experiments reveal consistent improvements in test accuracy even with longer GU ratios. 
Notably, allocating a larger percentage of the training period to GU leads to the top layers receiving less training. 
Consequently, the test accuracy does not necessarily increase monotonically with the GU ratio. For instance, when training a five-layer model with \fedbug using a 100\% GU stage ratio, the penultimate linear layer and the classifier are trained for only 40\% and 20\% of the total training iterations, respectively. 
In this extreme scenario, \fedbug not only saves considerable training time but also provides improvement. 
These results highlight the robustness of \fedbug to the GU ratio and suggest a small portion of the training time for gradual unfreezing may readily yield favorable results.

\begin{figure}[ht]
\centering
\includegraphics[width=0.9\textwidth]{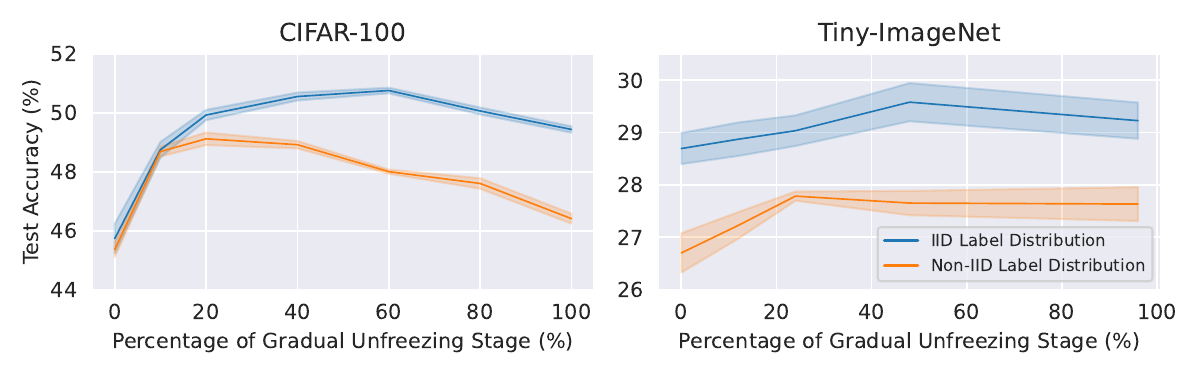}
\vspace{-2mm}
\caption{
    \textbf{Impact of Gradual Unfreezing Percentage.} 
    We investigate the effect of the percentage of the GU stage in CIFAR-100 and Tiny-ImageNet. The baseline framework is \fedavg (i.e., GU percentage of $0.0$). Our experiments reveal consistent improvements in test accuracy even with longer GU ratios, where the top layers receive considerably less training compared to \fedavg.
    }
\label{fig:exp_percentage_gu}
\end{figure}
\section{Conclusion}
In this work, we introduce \fedbug, a novel FL framework designed to mitigate client drift. 
By leveraging model parameters as anchors, \fedbug aligns clients while improving learning efficiency.
We perform theoretical analysis in an over-parameterized setting, revealing that \fedbug achieves a faster convergence rate compared to the widely adopted \fedavg framework.  
To empirically validate the effectiveness, we conduct extensive experiments on various datasets, training conditions, and network architectures, consistently demonstrating the superiority and compatibility of \fedbug.
Overall, our contributions include the introduction of a novel FL framework, theoretical analysis, and comprehensive empirical validation, highlighting the broad potential and applicability of FedBug.

\textbf{Limitations.} 
Our analysis is limited to a specific scenario involving two clients, a two-layer linear network, and an orthogonal dataset distribution. 
We do not address the challenges related to multiple clients, non-IID label distributions, non-orthogonal dataset distributions, or general convex functions, leaving these as potential areas for future research. In Appendix~\ref{sec:thm_disscussion}, we provide a detailed discussion on the three assumptions made in order to make the problem more tractable.

\textbf{Broader Impact.} The \fedbug framework has the potential for wide-ranging impact in areas where FL intersects with natural language processing, reinforcement learning, autonomous driving, personalized computing, and more. The uniqueness of the bottom-up gradual unfreezing concept may also open avenues in general distributed computing.


\bibliographystyle{plainnat}
\bibliography{reference}

\clearpage
\appendix

\section{Experiments}

\subsection{Implementation Details}
\label{subsec:app_exp_implement}
\textbf{Standard CNN.} Our code is based on~\cite{acar2021federated} and we extend it to include the Tiny-ImageNet dataset, the latest proposed FedDecorr~\cite{shi2022towards} and FedExP~\cite{jhunjhunwala2023fedexp} algorithms. 
We use Stochastic Gradient Descent (SGD) optimizer and a cross entropy loss function, with a learning rate of 0.1 and weight decay of 0.001. We use a batch size of 50 and perform horizontal flipping for training data augmentation on all datasets, while adding cropping augmentation on CIFAR-10 and CIFAR-100~\cite{acar2021federated}. 
For the training epochs, we run 300, 500, and 100 global rounds (communication rounds) with 10, 5, and 5 local epochs in CIFAR-10, CIFAR-100, and Tiny-ImageNet, respectively, as suggested in~\cite{acar2021federated, shi2022towards}. In the CIFAR-10 and CIFAR-100 (Tiny-ImageNet) datasets, we consider 100 (10) participants with participation rates of 1\% and 10\% (10\% and 30\%) at each global round, respectively~\cite{acar2021federated}. 1\% participation rate means that each client had a 1\% chance of being selected to join in a global round.

\textbf{ResNet.} For the ResNet architecture, we focus on CIFAR-100 and Tiny-ImageNet datasets. We conducted 100 global rounds with 5 local epochs on CIFAR-100 and 3 local epochs on Tiny-ImageNet~\cite{acar2021federated, shi2022towards}. In both datasets, we work with 10 participants with a 10\% client participation rate.
Notably, ResNet-18 and ResNet-34 can be treated as having either four modules each, or eight and sixteen residual blocks, respectively. To test the applicability of \fedbug, we consider two strategies: (1) unfreezing one ResNet module at a time, or (2) unfreezing one residual block at a time. The first strategy corresponds to a scenario where $M=4$, while the second corresponds to $M=8$ for ResNet-18 and $M=16$ for ResNet-34.

\textbf{Hyperparameters.} Our use of hyperparameters is similar to~\cite{acar2021federated}, where $\mu = 0.0001$ for FedProx~\cite{li2020federated}, $alpha = 0.01$ for FedDyn~\cite{acar2021federated}. We use $\beta = 0.01$ for FedDecorr~\cite{shi2022towards}, while FedExp~\cite{jhunjhunwala2023fedexp} does not require additional hyperparameters. 

\subsection{Experimental Results}
\label{subsec:app_exp_results}

\textbf{Ablation Study: Comparative Analysis of Freezing Strategies.} 
To compare the impact of different freezing strategies, we include the following methods that are closely related to ours: (1) Top-Down Gradual Unfreezing, which has been adopted in recent NLP literature for model fine-tuning~\cite{howard2018universal, mukherjee2019distilling, raffel2020exploring, liu2023improving} and fine-tuning the model from the output layer to the input layer; (2) Fixing the Last Layer throughout training, known as FedBABU~\cite{oh2021fedbabu}; and (3) Fixing the Last Two Layers, i.e., the classifier and the penultimate layer. It is worth noting that in the work of [\cite{howard2018universal}], they propose the use of Universal Language Model Fine-tuning (ULMFiT) for fine-tuning large NLP models for transfer learning, where the method incorporates top-down gradual unfreezing. The rationale behind top-down gradual unfreezing is that the last layer contains the least general knowledge and is more specialized for the original task. By employing top-down gradual unfreezing, the bottom layers undergo fewer changes, mitigating the risk of catastrophic forgetting and facilitating adaptation to the new task. While ULMFiT focuses on specific task adaptation for fine-tuning, the aim of \fedbug is to increase cross-client alignment through parameter anchors.

For our baseline, we utilize \fedavg and \fedbug $(20\%)$, referred to as "Vanilla" and "FedBug: Bottom-Up GU", respectively.
The results on CIFAR-10, CIFAR-100, and Tiny-ImageNet are shown in Figure~\ref{fig:exp_ablation_fixation}. We observe that all the alternative unfreezing or freezing frameworks perform worse than \fedavg, while our proposed method consistently outperforms them.
This ablation study highlights the importance of freezing strategies in achieving optimal test performance and provides compelling evidence supporting the intuition that gradually unfreezing layers from the bottom up leads to improved performance.


\begin{figure}[ht]
\centering
\includegraphics[width=1.\textwidth]{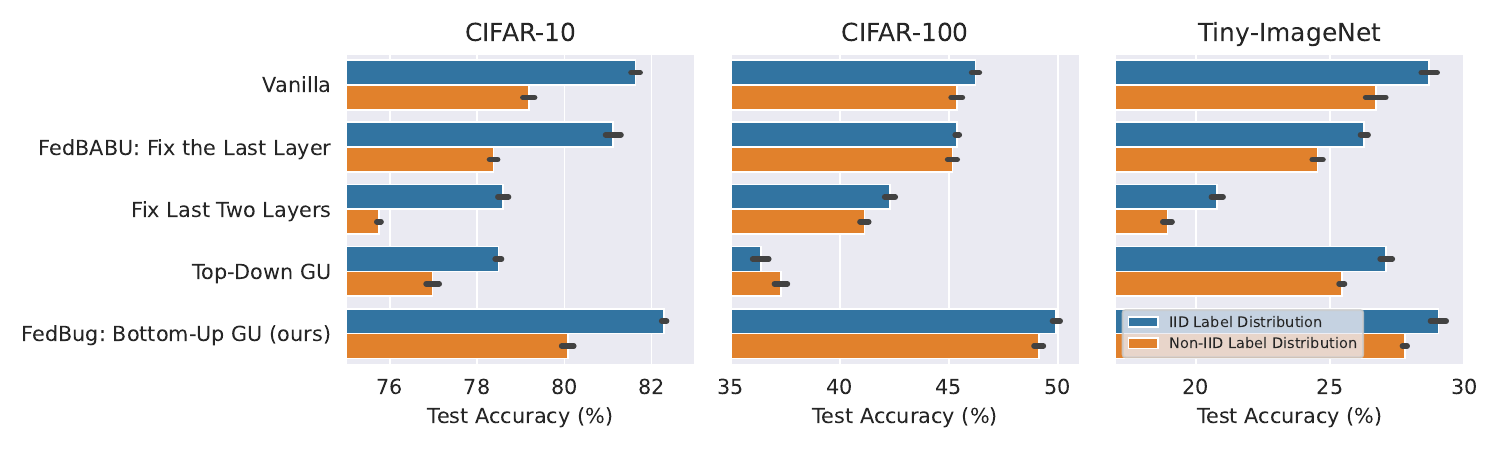}
\caption{
    \textbf{Impact of Gradual Unfreezing Percentage.} 
    }
\label{fig:exp_ablation_fixation}
\end{figure}

\textbf{Ablation Study: Number of Clients.} 
To compare the impact of different numbers of clients, we conducted an ablation study using a consistent client participation rate of 10\% for each setting. We utilized the CIFAR-100 dataset with ResNet-18 and employed the ResNet Module-wise unfreezing strategy. We utilize \fedbug (50\%) for this ablation study.
The experimental results were averaged over four random seeds. The results are summarized in Table~\ref{tbl_c100_nc}, revealing that even with a large number of clients, the \fedbug framework consistently improves testing accuracy.

\definecolor{hl}{gray}{0.9}
\renewcommand{\arraystretch}{0.99}{
\setlength{\tabcolsep}{5mm}{
\begin{table*}[t]
\centering
\resizebox{1.0\textwidth}{!}{
\begin{tabular}{lcccccccccccc}
\toprule
\multirow{4.5}{*}{Method} 
& \multicolumn{7}{c}{
\textbf{CIFAR-100} (ResNet-18; client participation rate: 10\%)} \\
\cmidrule{2-8}
&  \multicolumn{3}{c}{IID label distribution ($\alpha=\infty$)} 
&& \multicolumn{3}{c}{Non-IID label distribution ($\alpha=0.3$)} \\
\cmidrule{2-8}
&  \multicolumn{3}{c}{\# Clients} 
&& \multicolumn{3}{c}{\# Clients} \\
&   10 & 50 & 500 
&&  10 & 50 & 500 \\
\midrule
Vanilla 
&  52.55 & 42.40 
& 19.45 
&& 49.05 & 40.65 
& 19.32 \\
FedBug
& \cellcolor{hl}{\textbf{53.59}} & \cellcolor{hl}{\textbf{44.44}} 
& \cellcolor{hl}{\textbf{21.79}} 
&& \cellcolor{hl}{\textbf{49.93}} & \cellcolor{hl}{\textbf{41.32}} 
& \cellcolor{hl}{\textbf{19.56}} \\
\bottomrule
\end{tabular}
}
\vspace{-1mm}
\caption{\textbf{Experiments on CIFAR-100 with varying number of clients on ResNet-18.}
}
\vspace{-1mm}
\label{tbl_c100_nc}
\end{table*}
}
}



\section{Full Theoretical Analysis}
\label{sec:app_thm}
In this section, we present a theoretical analysis of the FedAvg and FedBug algorithms in a two-layer linear network with an orthogonal dataset distribution. 

\subsection{Task Setting and Model Architecture}
\label{subsec:app_thm_setup}

\textbf{Task and Evaluation.}
We consider a FL regression task, with two clients denoted as $c_1$ and $c_2$. Each client has different regression data, specifically $\mathcal{T}_1=\{x_1=[1,0], y_1=1\}$ and $\mathcal{T}_2=\{x_2=[0,1], y_2=1\}$. The objective is to minimize the L2 loss, with client $c_1$ ($c_2$) minimizing $L_1=\normx{f(x_1)-y_1}$ ($L_2=\normx{f(x_2)-y_2}^2$), where $f$ denotes the model.

\textbf{Model Architecture.}
The model architecture is a two-layer linear network $f$ with two nodes $[a, b]$ in the first layer and one node $[v]$ in the second layer. Specifically, $f(x) = x[a,b]^\top c$. The task setup implies that client $\ca$ ($\cb$) aims to minimize $L_1=|av-1|^2$ ($L_2=|bv-1|^2$).

The chosen setup, which includes an orthogonal dataset distribution, regression tasks, and a two-layer neural network model, has been previously employed to study transfer learning with out-of-distribution datasets~\cite{Kumar2022fine} and domain generalization~\cite{lee2023surgical}. Specifically,~\cite{Kumar2022fine} demonstrates that fine-tuning on out-of-distribution data can lead to feature distortion on in-distribution data. Meanwhile,~\cite{lee2023surgical} focused on the case of fine-tuning only the pre-trained encoder, which outperforms full-layer fine-tuning in domain generalization when the target data is insufficient. Our work contributes to the existing literature by extending previous approaches to a federated learning setup, where two clients have orthogonal dataset distributions. Addressing this new scenario introduces unique challenges and considerations that have not been explored in previous works. 

\subsection{Preliminary: \fedavg and \fedbug}

We review \fedavg and its notations, as illustrated in Figure~\ref{fig:thm_illustration}. 
During the $i$-th global round, the server distributes parameters $[a^i,b^i,v^i]$ to the clients. For example, client $\ca$ receives the initial parameter $[a^i_{c_1, 0}, b^i_{c_1, 0}, v^i_{c_1, 0}] (= [a^i,b^i,v^i])$. 
Each client individually optimizes cost function using their own parameters. After $k$ local iterations, the parameters of client $\ca$ are updated to $[a^i_{c_1, k}, b^i_{c_1, k}, v^i_{c_1, k}]$, and upon achieving local convergence, they become $[a^i_{c_1, *}, b^i_{c_1, *}, v^i_{c_1, *}]$. 
Once local convergence is reached, clients send their learned parameters back to the server. The server then averages the received parameters to obtain the new parameters $[a^{i+1},b^{i+1},v^{i+1}]=\frac{1}{2}([a^i_{c_1, *}, b^i_{c_1, *}, v^i_{c_1, *}]+[a^i_{c_2, *}, b^i_{c_2, *}, v^i_{c_2, *}])$. 
We denote the solution obtained at global convergence as $[a^*, b^*, v^*]$. Notably, a solution must satisfy $a^*=b^*$ since $a^*v^*=1$ and $b^*v^*=1$.

The proposed \fedbug differs from \fedavg only in the client-side update step: Clients first freeze $v$ and update $[a,b]$ for a few local iterations. Afterwards, the client unfreezes the last layer parameters and performs gradient descent on all the parameters. 

\begin{figure}[ht]
\centering
\includegraphics[width=1\textwidth]{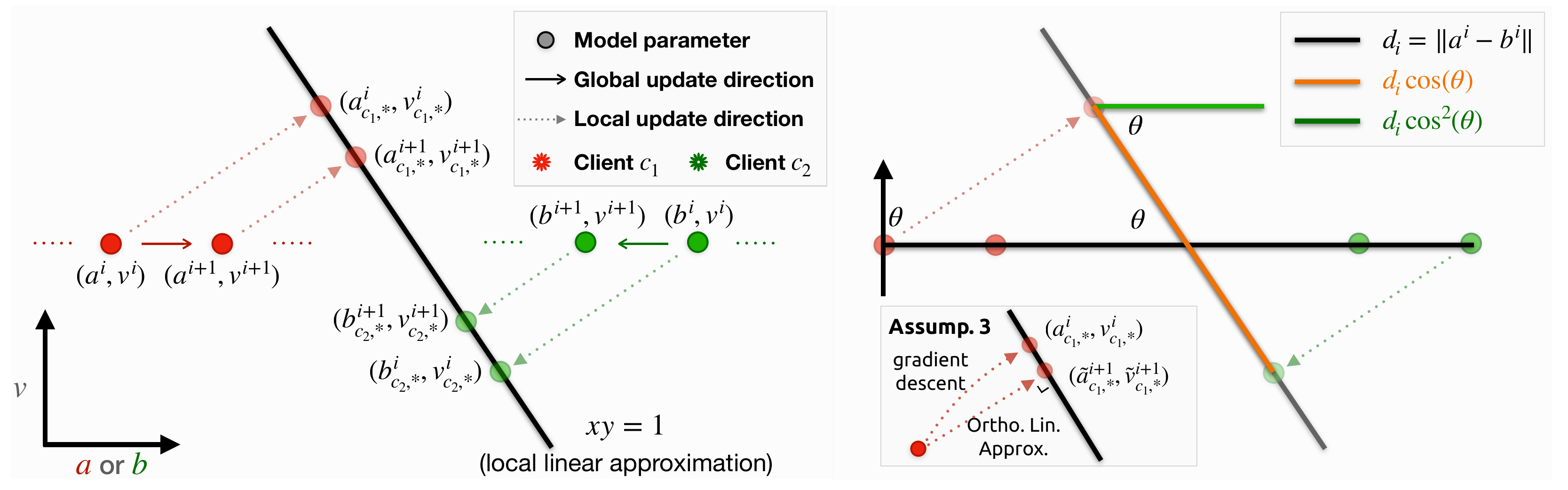}
\caption{
        \textbf{Left: Notation for our theoretical analysis.} The horizontal axis concurrently signifies the parameter axis for parameters $a$ and $b$. This approach enables us to visually explain the concept of client discrepancy, a term we define and utilize later.
        \textbf{Right: Visualization for Theorem~\ref{app_thm_Fedavg} and Assumption~\ref{app_asm_OrthoTrajectory}.} Different colors indicate the segments needed to calculate the client discrepancy contraction ratio. A higher alignment between the local update direction (represented by the dotted line) and the $v$ axis results in a larger contraction ratio. The notion behind Assumption~\ref{app_asm_OrthoTrajectory} is also visually illustrated.
}
\label{fig:thm_illustration}
\end{figure}

\subsection{The Learning Dynamics of Models Trained With \fedavg}
\label{subsec:app_thm_dynamics}

In this subsection, we provide an overview of the learning dynamics of models trained using the FedAvg algorithm. Specifically, at the client-side, if models take an infinitely small learning rate such that the optimization process becomes a gradient flow~\cite{gunasekar2017implicit, du2018algorithmic, du2018gradient, ye2023freeze}, the parameters' learning trajectory follows a hyperbolic track. Thus, each client's minimizers can be analytically derived due to the deterministic nature of the trajectory. 
At the server-side, the minimizers of the clients are averaged, leading to a discontinuity between each client's minimizers and the averaged parameters, presenting a challenge in discussing the continuous learning behavior of the model. To overcome this challenge, we introduce assumptions that enable us to better analyze the learning behavior.

\textbf{Local Iteration (Client Side).} 
In this section, we derive the local learning dynamics of the parameters for an infinitely small step size. We consider a gradient flow, implying that the model parameters are functions of time $t$. We denote the parameters of the model for client $c_1$ at time $t$ as $a(t)$, $b(t)$, and $v(t)$. A similar derivation applies for client $c_2$.

The gradient of the model at time $t$ can be defined as follows:
\begin{align*}
\frac{da(t)}{dt} & = -\eta\frac{\partial L_1}{\partial a(t)} = -\eta v(t)(a(t) v(t)-1),\\
\frac{db(t)}{dt} & = -\eta\frac{\partial L_1}{\partial b(t)} = 0,\\
\frac{dv(t)}{dt} & = -\eta\frac{\partial L_1}{\partial v(t)} = -\eta a(t)(a(t) v(t)-1).
\end{align*}
Subsequently, by multiplying $\frac{da(t)}{dt}$ by $a(t)$ and $\frac{dv(t)}{dt}$ by $v(t)$, we obtain:
\begin{align*}
a(t)\frac{da(t)}{dt} = v(t)\frac{dv(t)}{dt}.
\end{align*}
This equality leads to the following conservation law:
\begin{align*}
a(t)^2-v(t)^2 = a(0)^2-v(0)^2 = {a^i}^2-{v^i}^2.
\end{align*}
Assuming that ${a^i}^2-{v^i}^2$ is not zero, this equation represents a hyperbola in the $(a,v)$ plane, this suggests that the evolution of the parameters $a(t)$ and $v(t)$ follows a hyperbolic trajectory.

The local minimizers can be derived as follows. Recall that the learning trajectory at the client side is hyperbolic and the final solution must satisfy $av=1$. Using high school algebra, we obtain the following analytic solutions: 
\begin{align*}
a^i_{c_1, \infty} = a({\infty})
=\frac{1}{2}\sqrt{
            \frac{1}{2}
            (\sqrt{({a^i}^2-{v^i}^2)^2 + 4} + 
            ({a^i}^2-{v^i}^2))
            }, \\
v^i_{c_1, \infty} = v({\infty})
=\frac{1}{2}\sqrt{
            \frac{1}{2}
            (\sqrt{({a^i}^2-{v^i}^2)^2 + 4} - 
            ({a^i}^2-{v^i}^2))
            }.     
\end{align*}

\textbf{Global Round (Server Side).} This section examines the behavior of the averaged client parameters. According to the update rules of FedAvg, we have the following:
\begin{align*}
a^{i+1} 
& = \frac{1}{2}(a^i_{c_1, \infty}+a^i), \\
b^{i+1} 
& = \frac{1}{2}(b^i+b^i_{c_2, \infty}), \\ v^{i+1} 
& = \frac{1}{2}(v^i_{c_1, \infty}+v^i_{c_2, \infty}) \\
& = \frac{1}{4}(\sqrt{
            \frac{1}{2}
            (\sqrt{({a^i}^2-{v^i}^2)^2 + 4} - 
            ({a^i}^2-{v^i}^2))
            } \\
& \hspace{4.4mm} +\sqrt{
            \frac{1}{2}
            (\sqrt{({b^i}^2-{v^i}^2)^2 + 4} - 
            ({b^i}^2-{v^i}^2))
            })
\end{align*}
It is important to note that during local training, client $c_1$ only updates parameters $a$ and $v$, leaving $b$ unchanged. Therefore, $b^i_{c_1, \infty}=b^i_{c_1, 0}=b^i$. The same principle applies to client $c_2$ with respect to its own parameters.

\subsection{Challenges in Analysis}
\label{subsec:app_thm_challenges}

To the best of our knowledge, we are the first to explore the learning dynamics and convergence behavior of FL algorithms in an over-parameterized model and orthogonal dataset setting. This setup presents challenges not previously addressed in FL analysis. 
(1) \textbf{Non-convexity}: The model is over-parameterized, which leads to a non-convex loss landscape, making optimization challenging without additional constraints or restrictions. To illustrate the non-convex of the loss function, we can evaluate the second derivative of client $c_1$'s loss function $L_1=(av-1)^2$ with respect to the parameters $a$ and $v$. By computing the Hessian matrix and its eigenvalues, we can get insight about the convexity of the function. For instance, at the parameter values $a=0.1$ and $v=0.1$, the Hessian matrix yields one negative eigenvalue, indicating the non-convexity.
(2) \textbf{Hyperbolic local trajectories}: As shown in the previous subsection, the local trajectories of client-side parameters can follow a hyperbolic path. This adds complexity to the analysis of client-side learning dynamics, as the intersection point with the global minimum may not exhibit desirable properties and local convexity may not be clear. 

These challenges make the analysis of \fedavg in an over-parameterized model and orthogonal dataset setting a complex problem. To facilitate theoretical analysis and provide desirable guarantees, our assumptions are necessary. 

\subsection{The Convergence Rate of \fedavg and \fedbug}
\label{subsec:app_thm_convergence}

In this section, we prove that \fedbug converges faster than \fedavg. We begin with the introduction of the three assumptions and then proceed with our two theorems.

\begin{assumption} \label{app_asm_LocalLinear}
\textbf{(Local Linearity)} The server model parameters are initialized in the vicinity of the global minimum, such that the global minimum can be locally approximated as a linear function.
\end{assumption}

\begin{assumption} \label{app_asm_BoundC}
\textbf{(Bounded Local Convexity)} Under Assumption~\ref{app_asm_LocalLinear}, there exist constants $\beta_1 > 0$ and $0 < \beta_2 < 1$, such that for all $n$, the value of ${{v^n}^2}$ is bounded as follows: $\beta_1\beta_2 \leq {{v^n}^2} \leq \beta_1$.
\end{assumption}

\begin{assumption}
\label{app_asm_OrthoTrajectory}
\textbf{(Orthogonal Trajectory and Bounded Error)} Under Assumption~\ref{app_asm_LocalLinear} and Assumption~\ref{app_asm_BoundC}, with a properly chosen step size, the local gradient descent update trajectory can be approximated by a linear trajectory orthogonal to the global minimum. The approximation error, which quantifies the discrepancy between the gradient descent solution and the ideal orthogonal linear path solution, is bounded by $\alpha$ times the length of the orthogonal linear trajectory.
\end{assumption}

Assumption~\ref{app_asm_LocalLinear} ensures that \fedavg and \fedbug starts close enough to the global minimum, enabling convergence without the interference of poor initialization and unbounded convexity. 
Building upon Assumptions~\ref{app_asm_LocalLinear} and~\ref{app_asm_BoundC} further characterizes the local convexity of the objective function around the global minimum. By ensuring that it is bounded, both \fedavg and \fedbug are prevented from diverging or oscillating around the global minimum when using an appropriate step size.
Lastly, Assumption~\ref{app_asm_OrthoTrajectory} guarantees that \fedbug converges along a favorable direction and help us to quantify the improvement brought by single global round, while allowing for quantification of the approximation error. Please refer to the right panel of Figure~\ref{fig:thm_illustration} for illustration. We defer the discussion of limitation to Section~\ref{sec:thm_disscussion}.

In this section, we present our theoretical findings on the convergence behavior of \fedavg. Distinct from conventional FL convergence analysis, we discover that the unique FL setup permits an alternative approach for proving convergence. The cornerstone of our theorem is the observation that at each global round, the discrepancy between clients' parameters diminishes by a factor $r$. Here, $r$ is determined by the degree of alignment between the local update and the axial direction of the last layer parameters – a measure that captures the consistency of local updates with the global model's structure.

To measure this discrepancy reduction ratio, we define two useful terms:

\begin{definition}
Client discrepancy $d^i$ is the L1 distance between the server model parameters $a^i$ and $b^i$ at the $i$-th global round : $d^i = \normx{a^i-b^i}$.
\end{definition}

\begin{definition}
Client discrepancy contraction ratio $r$: $r=d^{i+1}/d^{i}$.
\end{definition}

Now, we present the theorem describing the convergence behavior of \fedavg:
\begin{theorem} \label{app_thm_Fedavg}
Under Assumptions~\ref{app_asm_LocalLinear},~\ref{app_asm_BoundC} and~\ref{app_asm_OrthoTrajectory}, models trained using \fedavg converge with the client discrepancy contraction ratio upper bounded by $\frac{1+\cos^2\theta (1+\alpha)}{2}$, where $\theta$ is the angle between the local update direction and the axis of the last layer parameter.
\end{theorem}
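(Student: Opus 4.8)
\textbf{Proof proposal for Theorem~\ref{app_thm_Fedavg}.}

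The plan is to track how the client discrepancy $d^i = \normx{a^i - b^i}$ evolves through one global round, using the geometric picture in the right panel of Figure~\ref{fig:thm_illustration}. First I would set up coordinates: by Assumption~\ref{app_asm_LocalLinear}, near the global minimum the solution manifold $\{av=1,\ bv=1\}$ (equivalently the line $a=b$ in the shared $(a,b)$-axis versus $v$-axis picture) can be treated as a linear subspace, so the current server point $[a^i, b^i, v^i]$ sits at a signed distance proportional to $d^i$ from this minimizer line. The key structural fact, which I would establish next, is that client $c_1$'s local update only moves $(a,v)$ and leaves $b$ fixed, while client $c_2$ only moves $(b,v)$ and leaves $a$ fixed; hence after local convergence, $b^i_{c_1,*} = b^i$ and $a^i_{c_2,*} = a^i$, so the new server parameters satisfy $a^{i+1} = \tfrac12(a^i_{c_1,*} + a^i)$ and $b^{i+1} = \tfrac12(b^i + b^i_{c_2,*})$.

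Next I would use Assumption~\ref{app_asm_OrthoTrajectory} to replace each client's true (hyperbolic) trajectory by the idealized straight line orthogonal to the minimizer manifold, incurring only a relative error bounded by $\alpha$. Along this orthogonal line, I would compute where client $c_1$'s path, started from $[a^i, b^i, v^i]$, meets the minimizer manifold; call the $a$-coordinate there $\tilde a$. The crucial computation is that the displacement $a^i_{c_1,*} - a^i$, projected appropriately, equals the original "offset" vector times $\cos^2\theta$ — this is the "double projection" phenomenon mentioned in the proof sketch: one factor of $\cos\theta$ comes from projecting the server-to-minimizer offset onto the local update direction (the direction of steepest descent, which makes angle $\theta$ with the $v$-axis), and a second factor of $\cos\theta$ comes from projecting that update back onto the discrepancy coordinate. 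I would carry this out symmetrically for $c_2$, then average: the averaging with the unchanged $a^i$ (resp. $b^i$) contributes the factor $\tfrac12$, and combining gives $d^{i+1} \le \tfrac12\big(1 + \cos^2\theta\,(1+\alpha)\big)\, d^i$, where the $(1+\alpha)$ absorbs the approximation error from Assumption~\ref{app_asm_OrthoTrajectory}. Assumption~\ref{app_asm_BoundC}, bounding ${v^n}^2 \in [\beta_1\beta_2, \beta_1]$, enters to guarantee the local dynamics actually converge (no oscillation/divergence) under the chosen step size, so that "local convergence" is well-defined and the orthogonal-trajectory approximation is legitimate.

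The main obstacle I anticipate is making the "double projection" argument rigorous rather than merely pictorial: I need to carefully define the local update direction (it depends on the instantaneous parameters and, for a finite number of steps versus gradient flow, on the whole trajectory), show that under Assumptions~\ref{app_asm_LocalLinear}--\ref{app_asm_OrthoTrajectory} it is well-approximated by a single fixed direction making a well-defined angle $\theta$ with the $v$-axis, and verify that the intersection of the (approximate) orthogonal line with the linearized minimizer manifold lands where the trigonometry predicts. A secondary subtlety is bookkeeping the error term: I must check that the $\alpha$-bound on trajectory deviation propagates to an $\alpha$-bound on the discrepancy contraction (as opposed to something like $\alpha$ scaled by a large constant), which is why the statement is an \emph{upper bound} $\tfrac{1+\cos^2\theta(1+\alpha)}{2}$ rather than an equality. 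Once these geometric lemmas are in place, the rest is the elementary averaging step and recognizing that $\tfrac{1+\cos^2\theta(1+\alpha)}{2} < 1$ whenever $\cos^2\theta(1+\alpha) < 1$, which gives the claimed linear convergence of $d^i \to 0$ and hence $a^i, b^i \to$ common value.
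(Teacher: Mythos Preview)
Your proposal is correct and follows essentially the same route as the paper's proof: decompose $d^{i+1}$ using the fact that $b^i_{c_1,*}=b^i$ and $a^i_{c_2,*}=a^i$ so that $a^{i+1}-b^{i+1}=\tfrac12(a^i-b^i)+\tfrac12(a^i_{c_1,*}-b^i_{c_2,*})$, then invoke Assumption~\ref{app_asm_OrthoTrajectory} and the double-projection geometry to bound $\lvert a^i_{c_1,*}-b^i_{c_2,*}\rvert$ by $d^i\cos^2\theta(1+\alpha)$. The only cosmetic difference is that the paper applies the double projection directly to the segment joining the two clients' idealized minimizers $(\tilde a^i_{c_1,*},\tilde v^i_{c_1,*})$ and $(\tilde b^i_{c_2,*},\tilde v^i_{c_2,*})$, rather than to each client's displacement separately as you describe, but the arithmetic is identical.
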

\begin{proof}
The proof aims to show that the client discrepancy contracts after one global round. The contraction ratio is determined by the angle between the axes of last layer parameters and the local update direction (refer to Figure~\ref{fig:thm_illustration} for illustration).

For conciseness, we denote the minima client $\ca$ and $\cb$ reached under Assumption~\ref{app_asm_OrthoTrajectory} as $(\tilde{a}^i_{c_1,*}, b^i, \tilde{v}^i_{c_1,*})$ and $(a^i, \tilde{b}^i_{c_2,*}, \tilde{v}^i_{c_2,*})$, respectively.  We can express the client discrepancy at the $i$-th global round as follows:
\begin{equation}
d^{i+1}
= \normx{a^{i+1}-b^{i+1}}
= \normx{\frac{a^i+a^i_{c_1,*}}{2}-\frac{b^i+b^i_{c_2,*}}{2}}
= \frac{1}{2}d^{i} + \frac{1}{2}\normx{a^i_{c_1,*}-b^i_{c_2,*}}
\end{equation}

Considering the approximation error stated in Assumption~\ref{app_asm_OrthoTrajectory}, we get:
\begin{equation}
\frac{1}{2}\normx{a^i_{c_1,*}-b^i_{c_2,*}}\leq \normx{\tilde{a}^i_{c_1,*}-\tilde{b}^i_{c_2,*}} + e_i, 
\end{equation}
where $0 < e_i \leq \alpha \normx{\tilde{a}^i_{c_1,*}-\tilde{b}^i_{c_2,*}}$ is the approximation error.

To obtain $\normx{\tilde{a}^i_{c_1,*}-\tilde{b}^i_{c_2,*}}$, we notice that it is the projection of the line segment connecting $(\tilde{a}^i_{c_1,*}, \tilde{v}^i_{c_1,*})$ and $(\tilde{b}^i_{c_2,*}, \tilde{v}^i_{c_2,*})$ onto the plane orthogonal to the axial direction of $v$. The line segment, on the other hands, represent the projection of the original client discrepancy on to the linearized global minima. Therefore, $\normx{\tilde{a}^i_{c_1,*}-\tilde{b}^i_{c_2,*}} = d^{i} \cos^2\theta$, as it is a double projection of the original client discrepancy.

Substituting this into the expression for $d^{i+1}$, we get:
\begin{equation}
d^{i+1}
= \frac{1}{2}d^{i} + \frac{1}{2}\normx{a^i_{c_1,*}-b^i_{c_2,*}}
\leq \frac{1}{2}d^{i} + \frac{1}{2}d_{i}\cos^2\theta (1+\alpha)
\end{equation}
This yields the ratio
\begin{equation}
r = \frac{d^{i+1}}{d^{i}} \leq \frac{1+\cos^2\theta(1+\alpha)}{2}
\end{equation}
\end{proof}

We proceed to theoretically demonstrate the superiority of \fedbug over \fedavg by proving that \fedbug exhibits a smaller contraction ratio than \fedavg. To better comprehend the learning behavior, we focus on the case that the last layer parameter $c$ is frozen only for \textit{one local training iteration} and simultaneously perform gradient descent only on parameters $a$ and $b$. Afterwards, all clients unfreeze parameter $v$ and conduct gradient descent on all parameters. Importantly, unlike the learning dynamics of \fedavg, we consider a finite step size in this analysis.

\begin{theorem} \label{app_thm_Fedbug}
Under Assumptions~\ref{app_asm_LocalLinear}, ~\ref{app_asm_BoundC}, and ~\ref{app_asm_OrthoTrajectory}, there exists $1-\beta_2<m<1$ such that if the step size satisfies $\frac{1-m}{\beta_1 \beta_2}<\eta<\frac{1}{\beta_1}$, \fedbug converges with a client discrepancy contraction ratio upper bounded by $\frac{1+m\cos^2\theta(1+\alpha)}{2}$, where $\theta$ is defined as in Theorem~\ref{app_thm_Fedavg}.
\end{theorem}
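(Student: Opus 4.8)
\textbf{Proof proposal for Theorem~\ref{app_thm_Fedbug}.}

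The plan is to mirror the structure of the proof of Theorem~\ref{app_thm_Fedavg}, but to first execute one frozen-$v$ gradient-descent step on $[a,b]$ and track how much this step shrinks the client discrepancy \emph{before} the clients run to their local minima. Concretely, at the $i$-th global round client $\ca$ starts from $[a^i, b^i, v^i]$ and, with $v$ frozen, updates $a$ by one gradient step: $a^i_{c_1,1} = a^i - \eta v^i(a^i v^i - 1)$, while $b$ is untouched; symmetrically, client $\cb$ updates $b^i_{c_2,1} = b^i - \eta v^i(b^i v^i - 1)$ with $a$ frozen. I would first compute the discrepancy between the two clients' first-layer parameters relevant to the averaging step, namely $\normx{a^i_{c_1,1} - b^i_{c_2,1}}$ where $a^i_{c_1,1}$ is $\ca$'s (updated) $a$-coordinate and $b^i_{c_2,1}$ is $\cb$'s (updated) $b$-coordinate. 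The key algebraic observation is that, near the global minimum where $a^i v^i \approx 1 \approx b^i v^i$ and using $d^i = \normx{a^i - b^i}$, this single frozen step contracts the "along-the-anchor" component: expanding, $a^i_{c_1,1} - b^i_{c_2,1} = (a^i - b^i)(1 - \eta {v^i}^2)$ up to the linearization error controlled by Assumption~\ref{app_asm_LocalLinear}, so the discrepancy is multiplied by $|1 - \eta {v^i}^2|$.

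Next I would invoke Assumption~\ref{app_asm_BoundC} to bound this factor uniformly: since $\beta_1\beta_2 \le {v^i}^2 \le \beta_1$, choosing $\eta$ in the window $\frac{1-m}{\beta_1\beta_2} < \eta < \frac{1}{\beta_1}$ forces $1 - \eta{v^i}^2 \in (1 - \eta\beta_1\beta_2,\, 1 - \eta\beta_1\beta_1\beta_2/\beta_1)$; more carefully, $\eta < 1/\beta_1$ gives $\eta {v^i}^2 < 1$ so the factor is nonnegative, and $\eta > (1-m)/(\beta_1\beta_2)$ gives $\eta {v^i}^2 > (1-m){v^i}^2/(\beta_1\beta_2) \ge 1-m$, hence $0 \le 1 - \eta{v^i}^2 < m$. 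The constraint $1 - \beta_2 < m < 1$ is exactly what makes this window nonempty (it requires $(1-m)/(\beta_1\beta_2) < 1/\beta_1$, i.e. $1 - m < \beta_2$). So after the frozen step the effective first-layer discrepancy that the subsequent $v$-unfrozen phase sees is at most $m\, d^i$ rather than $d^i$.

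I would then re-run the geometric double-projection argument from Theorem~\ref{app_thm_Fedavg} verbatim, but with the reduced starting discrepancy $m\,d^i$ in place of $d^i$: the clients now complete local training along trajectories approximated (Assumption~\ref{app_asm_OrthoTrajectory}) by lines orthogonal to the linearized global minimum, the surviving post-averaging discrepancy is the double projection $m\,d^i \cos^2\theta$ inflated by the error factor $(1+\alpha)$, and the server average contributes the usual $\frac12 d^i + \frac12(\text{client term})$ split. This yields $d^{i+1} \le \frac12 d^i + \frac12 m\,d^i \cos^2\theta(1+\alpha)$, i.e. $r \le \frac{1 + m\cos^2\theta(1+\alpha)}{2}$, which is strictly smaller than the \fedavg bound since $m < 1$. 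The main obstacle I anticipate is making the single-frozen-step contraction claim rigorous: the identity $a^i_{c_1,1} - b^i_{c_2,1} = (a^i-b^i)(1-\eta{v^i}^2)$ holds exactly only when $a^i v^i = b^i v^i = 1$, so I need Assumption~\ref{app_asm_LocalLinear} to absorb the deviation and then argue that the frozen-$v$ phase and the orthogonal-trajectory approximation compose without the errors interacting badly — in particular that running to local convergence after the frozen step still lands on (a linear approximation of) the global-minimum manifold, so that Assumption~\ref{app_asm_OrthoTrajectory}'s projection picture still applies with $m d^i$ as the input length and $\theta$ unchanged. Verifying that $\theta$ (the local-update-vs-$v$-axis angle) is not itself perturbed by the extra step, or quantifying that perturbation into the $\alpha$ slack, is the delicate point.
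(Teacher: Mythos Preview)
Your proposal matches the paper's proof essentially step for step: one frozen-$v$ gradient step, the exact contraction factor $1-\eta{v^i}^2$, the step-size window from Assumption~\ref{app_asm_BoundC}, and then the double-projection argument of Theorem~\ref{app_thm_Fedavg} applied to the reduced discrepancy. The only correction is that your ``main obstacle'' is not one: the identity $a^i_{c_1,1} - b^i_{c_2,1} = (a^i-b^i)(1-\eta{v^i}^2)$ is \emph{exact}, not approximate, since the $+\eta v^i$ terms from the two gradient steps cancel identically---so no appeal to Assumption~\ref{app_asm_LocalLinear} is needed at that point, and the paper indeed uses it as an equality.
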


\begin{proof}
To prove Theorem~\ref{app_thm_Fedbug}, we divide the learning process into two stages. In the first stage, we update the parameters $a^i_{c1,0}$ and $b^i_{c2,0}$ for one step while freezing the last layer parameter $c$. Therefore we have:
\begin{align}
a^i_{c1,1} & = a^i - \eta v^i(a^iv^i-1), \nonumber \\
b^i_{c2,1} & = b^i - \eta v^i(b^iv^i-1). \nonumber
\end{align}

We now compute the distance between the updated parameters:
\begin{equation} \label{eq1}
\normx{a^i_{c1,1} - b^i_{c2,1}} = \normx{(a^i - b^i)(1 - \eta {v^i}^2)}.
\end{equation}

In the second stage, we unfreeze parameter $v$ and perform gradient descent on all the parameters. Then, we can obtain $\normx{\tilde{a}^i_{c_1,*}-\tilde{b}^i_{c_2,*}}=\normx{a^i_{c_1,1}-b^i_{c_2,1}}\cos^2\theta$ similar to Theorem~\ref{app_thm_Fedavg} and get:
\begin{equation}
\begin{split}
d^{i+1}
& \leq \frac{1}{2}d^{i} + \frac{1}{2}\normx{\tilde{a}^i_{c_1,*}- \tilde{b}^i_{c_2,*}}(1+ \alpha) \\
& = \frac{1}{2}d^{i} + \frac{1}{2}\normx{a^i_{c_1,1}-b^i_{c_2,1}}\cos^2\theta(1+\alpha)\\
& = \frac{1}{2}d^{i} + \frac{1}{2}(1-\eta c^{i2})\normx{a^i_{c_1,0}-b^i_{c_2,0}}\cos^2\theta(1+\alpha) \\
\end{split}
\end{equation}
Using Assumption~\ref{app_asm_BoundC} and the condition on step size, we obtain the contraction ratio:
\begin{equation}
r = \frac{1+(1-\eta c^{i2})\cos^2\theta(1+\alpha)}{2}
< \frac{1+m\cos^2\theta(1+\alpha)}{2}
\end{equation}
\end{proof}

\textbf{Implication From the Proof.} Our proof introduces a novel approach to analyzing the convergence behavior of the \fedavg algorithm in an over-parameterized context, contributing a significant insight to the FL community. We extend beyond the traditional single-variable loss function analysis used in previous studies, tackling a more complex multi-variable loss scenario. At the same time, our unique orthogonal task setup also practically captures the client drift caused by dataset heterogeneity. Moreover, we propose a novel measure of client alignment - the contraction ratio $r$, which quantifies the alignment between local updates and the global model's structure, thereby offering a more intuitive understanding of convergence dynamics. Our findings point to a promising future research direction - optimizing local-global alignment to enhance convergence in over-parameterized models and orthogonal regression datasets. Our work not only deepens the understanding of FL algorithms but also paves the way for creating more efficient FL systems.

\subsection{Empirical Validation of Convergence Rate}
\label{subsec:app_thm_exp}

\textbf{Experimental Setup.} We conduct simulation experiments to empirically verify our theorem and the underlying assumptions. In our experiments, we consider a two-client federated learning setup with a two-layer network, following the orthogonal dataset setup described in the theoretical analysis. The model parameters $(a^0,b^0,v^0)$ are initialized uniformly from the interval $[0,2]$, and we repeat the experiments 50 times with different seeds. For experimental details, we perform $80$ global rounds in each experiment. On the local side, we utilize the Stochastic Gradient Descent (SGD) optimizer with a learning rate of 0.1, and each client model is trained for 50 local iterations. To compute the theoretical contraction ratio near the minima, we explicitly estimate $\theta$ using the gradient of the client model and then compare it with the empirical one $r=\frac{d^{i+1}}{d^{i}}$, where $d^{i}$ represents the difference between $a^i$ and $b^i$.

\subsection{Discussions}
\label{sec:thm_disscussion}

\textbf{Relation to FedBABU.} Our analysis extends to FedBABU, in which the last layer parameter $v$ remains constant throughout the training. In this context, the client discrepancy contraction ratio reduces to $\frac{1}{2}$.
Despite the rapid convergence of FedBABU, it encounters a significant challenge. Similar to Proposition $2$ presented in~\cite{lee2023surgical}, we can consider a scenario where a non-linear function (e.g., Rectified Linear Unit) is positioned between the first and second layers and $v$ is initialized with a negative value. In such a situation, FedBABU falls short of achieving the optimum because the model output always remains non-positive, while both \fedavg and \fedbug can still converge to some stationary points. 

\textbf{Discussion About Assumptions.} Our analysis is based on three key assumptions that help us derive tractable convergence results, thereby simplifying the complex task of dealing with the global parameter sequence.

\begin{itemize}
    \item The first assumption (Local Linearity) considers the global minimum as a linear function, allowing for the direct quantification of the client discrepancy and the double projection trick, making the derivation more straightforward. However, we recognize that this approximation introduces a margin of error that could influence the overall model performance and the applicability of the theoretical results. Future work may aim to quantify this error consider first-order or second-order Taylor expansion and determine an upper bound to better understand its impact.
    \item Assumption 2 (Bounded Local Convexity) aligns with previous work in the Federated Learning (FL) domain. It bounds the local convexity, preventing the algorithms (\fedavg and \fedbug) from diverging or oscillating when using an appropriate step size. 
    \item Assumption 3 (Orthogonal Trajectory) builds upon Assumption 1 and explicitly bounds the approximation error to account for potential curvature in the gradient flow. This error is most pronounced at the beginning of the parameter update and could influence the algorithms' convergence trajectory. Interestingly, as the parameters get closer to the linearized global minima, their direction becomes increasingly orthogonal to the global minima, a consequence of the first assumption. Future research should explore how this error evolves during training and how it impacts the final model performance.
\end{itemize}

\textbf{Future Directions.} We briefly explore possible generalization of our presented work as the future direction from the task and model perspective. \textbf{Task.} First of all, we provide a direct generalization of the orthogonal task setup, where we use $\mathcal{T}_1=\{x_1=e_1, y_1=1\}$ and $\mathcal{T}_2=\{x_2=e_2, y_2=1\}$ as datasets. Here,  $e_1$ and $e_2$ are orthogonal vectors, and we can define the model function as $f(x) = x[ae_1,be_2]^\top v$. Besides, we can also consider the case of $m$ clients and redefine the client discrepancy to accommodate such scenarios. Moreover, the effect of dataset orthogonality and non-orthogonality can be studied to understand the effect of the general convergence of FL algorithms. \textbf{Model.} Generalizing the model presents two interesting avenues for research. Firstly, we can consider feature dimensions greater than one, which adds complexity to the analysis due to increased over-parameterization. Secondly, extending the analysis to a multi-layer neural network opens up possibilities for observing phenomena such as acceleration, as previously explored in studies like~\cite{arora2018optimization, shi2022towards}. Investigating these directions will contribute to a more comprehensive understanding of the convergence behavior of FL algorithms.

\newpage
\section{Comparison Between \fedavg and \fedbug}
\label{sec:app_alg}

We provide self-contained outlines of the \fedavg and \fedbug algorithms in Algorithm~\ref{alg:app_fedavg} and Algorithm~\ref{alg:app_fedbug}, respectively. The key difference, highlighted in red and blue, is that while \fedavg updates all $M$ modules at each local iteration, \fedbug unfreezes and updates one module at the beginning, progressively training an additional module every $\frac{PK}{M}$ local iterations. As \fedbug does not require extra information like gradients, momentum, or regularization, it can be easily incorporated into other FL algorithms.

\setlength{\fboxsep}{1pt}
\begin{multicols}{2}
\begin{algorithm}[H]
    \caption{\colorbox{red!20}{\texttt{FedAvg}}}
    \begin{algorithmic}[1]
        \Statex \textbf{Notation}: 
        \Statex \hspace{15px} $\theta^{1:m}$: the first $m$ modules of model $\theta$ 
        \Statex \hspace{15px} $R$: number of global rounds
        \Statex \hspace{15px} $K$: number of local iterations
        \Statex 
        \State \textbf{Input}: global model $\theta$ with $M$ modules
        \For {$r = 1, \dots, R$}
        \State Sample clients $S \subseteq \{1,...,N\}$
        \For {each client $i \in S$ in parallel}
        \State Initialize local model $\theta_{i} \gets \theta$
        \For {$k=1,\dots, K$}
        \State $\theta_{i}^{1:\colorbox{red!20}{$M$}}  \gets \theta_{i}^{1:\colorbox{red!20}{$M$}} - \eta_l\nabla F_i (\theta_{i}^{1:\colorbox{red!20}{$M$}})$
        \EndFor
        \State $\Delta_i \gets \theta_{i} - \theta$
        \EndFor
        \State $\theta \gets \theta + \frac{\eta_g }{\lvert S \rvert}\sum_{i \in \mathcal{S}}\Delta_i$
        \EndFor
    \end{algorithmic}
    \label{alg:app_fedavg}
\end{algorithm}

\columnbreak

\setlength{\columnsep}{+1cm}

\begin{algorithm}[H]
    \caption{\colorbox{blue!20}{\texttt{FedBug}}}
    \begin{algorithmic}[1]
        \Statex \textbf{Notation}: 
        \Statex \hspace{15px} $\theta^{1:m}$: the first $m$ modules of model $\theta$ 
        \Statex \hspace{15px} $R$: number of global rounds
        \Statex \hspace{15px} $K$: number of local iterations
        \Statex \hspace{15px} $P$: gradual unfreezing stage percentage
        \State \textbf{Input}: global model $\theta$ with $M$ modules
        \For {$r = 1, \dots, R$}
        \State Sample clients $S \subseteq \{1,...,N\}$
        \For {each client $i \in S$ in parallel}
        \State Initialize local model $\theta_{i} \gets \theta$
        \For {$k=1,\dots, K$}
        \State \colorbox{blue!20}{$m \gets \min \{M, \lceil \frac{kM}{PK} \rceil \}$}
        \State $\theta_{i}^{1:\colorbox{blue!20}{$m$}}  \gets \theta_{i}^{1:\colorbox{blue!20}{$m$}} - \eta_l\nabla F_i (\theta_{i}^{1:\colorbox{blue!20}{$m$}})$ 
        \EndFor
        \State $\Delta_i \gets \theta_{i} - \theta$
        \EndFor
        \State $\theta \gets \theta + \frac{\eta_g }{\lvert S \rvert}\sum_{i \in \mathcal{S}}\Delta_i$
        \EndFor
    \end{algorithmic}
    \label{alg:app_fedbug}
\end{algorithm}
\end{multicols}

\medskip

\end{document}